\documentclass[11pt]{article}
\pdfoutput=1
\usepackage{enumerate}
\usepackage{pdfsync}
\usepackage[OT1]{fontenc}
\PassOptionsToPackage{numbers,sort&compress}{natbib}
\usepackage{smile}
\usepackage[colorlinks,
            linkcolor=red,
            anchorcolor=blue,
            citecolor=blue
            ]{hyperref}
\usepackage{color, colortbl}
\usepackage{fullpage}
\usepackage{hyperref}
\usepackage[protrusion=true,expansion=true]{microtype}
\usepackage{pbox}
\usepackage{setspace}
\usepackage{tabularx}
\usepackage{float}
\usepackage{wrapfig,lipsum}
\usepackage{enumitem}
\usepackage{microtype}
\usepackage{graphicx}
\usepackage{subfigure}
\usepackage{enumerate}
\usepackage{enumitem}
\usepackage{pgfplots}
\usetikzlibrary{arrows,shapes,snakes,automata,backgrounds,petri}
\usepackage{booktabs} 

\newtheorem{condition}[theorem]{Condition}

\usepackage{enumitem}

\makeatletter
\newcommand*{\rom}[1]{\expandafter\@slowromancap\romannumeral #1@}
\makeatother
\title{\huge Sharp First-Order Lower Bounds for Higher-Order Smooth Nonconvex Optimization}

\author
{
	Dongruo Zhou\thanks{Department of Computer Science, Indiana University Bloomington, IN 47408, USA; e-mail: {\tt dz13@iu.edu}}  
}

\begin{document}
\date{}
\maketitle

\begin{abstract}
We study the deterministic first-order oracle complexity of finding \(\epsilon\)-stationary points in smooth nonconvex optimization when the objective satisfies higher-order smoothness assumptions. While the classical \(\epsilon^{-2}\) rate is optimal under only Lipschitz gradients, higher-order smoothness leads to accelerated first-order upper bounds, most notably the \(\epsilon^{-7/4}\) rate under Lipschitz Hessians and the \(\epsilon^{-5/3}\) rate under Lipschitz third derivatives. The matching lower bounds, however, have remained open. We resolve this gap by proving a new dimension-free first-order lower bound for higher-order smooth nonconvex functions, valid for every finite smoothness order. In particular, our construction gives a matching \(\Omega(\epsilon^{-7/4})\) lower bound in the Hessian-Lipschitz case and a matching \(\Omega(\epsilon^{-5/3})\) lower bound in the third-order-smooth regime. The hard instance is based on a \emph{block-chain} mechanism that enforces blockwise oracle revelation while preserving the smoothness structure needed for the scalar hard instance. The lower-bound construction was discovered with the assistance of ChatGPT 5.5 Pro and subsequently verified by the authors.
\end{abstract}

\section{Introduction}

Finding approximate stationary points is one of the basic tasks in smooth nonconvex optimization. Given a differentiable objective \(f\), the goal is to find a point \(x\) with 
\begin{align}
\|\nabla f(x)\|\le\epsilon.\notag
\end{align}
We focus on first-order methods, which only query function values and gradients, because they are the most widely used methods in large-scale nonconvex optimization. For functions with \(L_1\)-Lipschitz gradients and initial gap \(\Delta\), gradient descent finds such a point in \(O(\Delta L_1\epsilon^{-2})\) first-order oracle calls. This rate is worst-case optimal under only gradient Lipschitzness, so in the general smooth function class there is no room for acceleration beyond the classical \(\epsilon^{-2}\) dependence.

\begin{table}[t]
\centering
\caption{Known dimension-free deterministic first-order oracle complexity under higher-order smoothness. Only the dependence on \(\epsilon\) is shown.}
\label{tab:first-order-epsilon-dependence}
\small
\begin{tabular}{@{}lccc@{}}
\toprule
 & \(p=1\) & \(p=2\) & \(p\ge3\) \\
\midrule
Upper bound & \makecell[c]{\(\epsilon^{-2}\)\\(gradient descent)} & \makecell[c]{\(\epsilon^{-7/4}\)\\\citep{agarwal2017finding,carmon2018acceleratedmethods,jin2018acceleratedsaddle,zhang2021escapesaddle,carmon2017convexuntilproven,li2023restartedagd,marumo2024parameterfreeagd}} & \makecell[c]{\(\epsilon^{-5/3}\)\\\citep{carmon2017convexuntilproven}} \\
\cmidrule(lr){1-4}
Lower bound & \makecell[c]{\(\epsilon^{-2}\)\\\citep{carmon2020lowerfirstorderI}} & \makecell[c]{\(\epsilon^{-12/7}\)\\\citep{carmon2017lowerfirstorder}} & \makecell[c]{\(\epsilon^{-8/5}\)\\\citep{carmon2017lowerfirstorder}} \\
\cmidrule(lr){1-4}
\textbf{Our result} & \rule{1.5em}{0.4pt} & \makecell[c]{\(\boldsymbol{\epsilon^{-7/4}}\)\\(Theorem~\ref{thm:first-order-lower-bound})} & \makecell[c]{\(\boldsymbol{\epsilon^{-5/3}}\)\\(Theorem~\ref{thm:first-order-lower-bound})} \\
\bottomrule
\end{tabular}
\end{table}

A natural way to seek faster rates is to impose higher-order smoothness on the objective. The first and most important case is second-order smoothness, namely Lipschitz continuity of the Hessian: it is the weakest standard assumption beyond gradient Lipschitzness that controls how curvature changes, and it gives algorithms enough structure to distinguish convex-like regions from genuinely nonconvex ones. Algorithmic evidence for acceleration under this assumption comes from several related oracle models and target guarantees. One line of work uses Hessian-vector products to obtain curvature-aware guarantees, often with the stronger goal of finding approximate local minima. For example, \citet{agarwal2017finding} approximately solve cubic-regularized subproblems and find approximate local minima within \(O(\epsilon^{-7/4})\) Hessian-vector-product complexity. In a related Hessian-vector-product-based accelerated framework, \citet{carmon2018acceleratedmethods} obtain \(O(\epsilon^{-7/4})\) complexity for finding first-order stationary points. Within the first-order saddle-escaping literature, accelerated and negative-curvature-based gradient methods yield \(O(\epsilon^{-7/4})\) complexity bounds in the Hessian-Lipschitz setting \citep{jin2018acceleratedsaddle,zhang2021escapesaddle}.

The line closest to our setting keeps both the oracle and the target purely first-order. The accelerated ``convex until proven guilty'' framework of \citet{carmon2017convexuntilproven} runs accelerated gradient in regions that behave convexly and uses failure of convexity as a certificate for progress in the nonconvex landscape, giving \(O(\epsilon^{-7/4}\log(1/\epsilon))\) first-order complexity under Lipschitz Hessians and \(O(\epsilon^{-5/3}\log(1/\epsilon))\) under Lipschitz third derivatives. Subsequent work refined this picture in two directions: restarted accelerated gradient removes the logarithmic factor in the Hessian-Lipschitz case \citep{li2023restartedagd}, while parameter-free accelerated methods recover the \(O(\epsilon^{-7/4})\) complexity without requiring the same smoothness-parameter tuning \citep{marumo2024parameterfreeagd}. \citet{jiang2024quasinewton} give a gradient-only quasi-Newton method with \(O(d^{1/4}\epsilon^{-13/8})\) complexity, improving the \(\epsilon\)-dependence in low dimension but with explicit dimension dependence rather than a dimension-free guarantee. Thus higher-order smoothness does allow acceleration beyond \(\epsilon^{-2}\), even when the algorithm only observes function values and gradients. The absence of matching dimension-free lower bounds has been explicitly noted in later work that sharpened or reinterpreted the upper-bound landscape: even after the logarithmic factor was removed, and even after online-learning viewpoints recovered parts of the upper-bound picture, the dimension-free deterministic first-order lower bound remained unchanged \citep{li2023restartedagd,cutkosky2023online}.

Despite these upper-bound developments, whether the accelerated rates were optimal remained unclear. On the lower-bound side, the best previous result is due to \citet{carmon2017lowerfirstorder}, who proved an \(\Omega(\epsilon^{-12/7})\) lower bound in the Hessian-Lipschitz setting and an \(\Omega(\epsilon^{-8/5})\) lower bound for arbitrarily smooth functions. This left an exponent gap of \(1/28\) in the Hessian-Lipschitz case, between \(12/7\) and \(7/4\), and an exponent gap of \(1/15\) relative to the \(5/3\) first-order rate under Lipschitz third derivatives. Whether these gaps reflected a limitation of the known lower-bound constructions or a genuine separation has remained a long-standing open problem. This motivates the following question.

\begin{center}
\emph{What is the optimal first-order oracle complexity of finding \(\epsilon\)-stationary points when higher-order smoothness is available?}
\end{center}

In this work, we answer this question by proving a new lower bound, which resolves a long-standing open gap in deterministic first-order oracle complexity. Our contributions are as follows.

\begin{itemize}
\item As summarized in Table~\ref{tab:first-order-epsilon-dependence}, we give a new dimension-free first-order lower-bound construction for higher-order smooth nonconvex functions, valid for every finite smoothness order \(p\). In the Hessian-Lipschitz case, the construction gives the matching \(\Omega(\epsilon^{-7/4})\) dependence, closing the gap between the previous \(\epsilon^{-12/7}\) lower bound and the best known first-order upper bound. Under Lipschitz third derivatives, it gives the matching \(\Omega(\epsilon^{-5/3})\) dependence in the third-order-smooth regime. The full theorem tracks the dependence on \(\Delta\), \(L_1,\ldots,L_p\), and the active smoothness constraints.
\item Technically, we introduce a block-chain hard instance that augments the scalar transition problem with a \emph{linear operator} for observation. The nonconvex potential is applied after a linear readout, whose operator norm provides an additional knob for calibrating higher-order smoothness while preserving the scalar residual certificate. To keep this pullback compatible with first-order revelation, each scalar coordinate is replaced by a chain block: the previous scalar state enters through an entry coordinate, whereas the current scalar state is read through a suffix projection. This separation lets a zero-respecting method reveal the construction only layer by layer, yielding a block-layer tail-hiding property and a gradient lower bound before the hidden tail is reached.
\end{itemize}

\paragraph{Notation.}
For $n\in\mathbb N$, write $[n]=\{1,\ldots,n\}$. We use $\langle\cdot,\cdot\rangle$ for the Euclidean inner product, $\|\cdot\|$ for the Euclidean norm of a vector and the induced operator norm of a matrix, and $\operatorname{supp}(x)$ for the support of a vector $x$. The vectors \(e_1,e_2,\ldots\) denote the standard basis vectors in the ambient Euclidean space; when the dimension matters, we write \(e_j^{(m)}\in\mathbb R^m\). For a \(C^q\) function \(f\), \(D^q f(x)\) denotes its \(q\)-th Fr\'echet derivative at \(x\), viewed as a symmetric \(q\)-linear form. For a \(q\)-linear form \(A\), we write
\[
    \|A\|_{\rm op}
    :=
    \sup_{\|v_1\|,\ldots,\|v_q\|\le1}|A[v_1,\ldots,v_q]|.
\]
For a map \(G\) between Euclidean spaces, \(\operatorname{Lip}(G)\) denotes its global Lipschitz constant with respect to the induced Euclidean norms; in particular, \(\operatorname{Lip}(D^q f)\) uses the above operator norm on \(q\)-linear forms. The notation $a\lesssim b$ means $a\le Cb$ for a positive absolute constant $C$, $a\gtrsim b$ means $b\lesssim a$, and $a\asymp b$ means both $a\lesssim b$ and $b\lesssim a$. We use $O(\cdot)$, $\Omega(\cdot)$, and $\Theta(\cdot)$ with the same convention that hidden constants are positive and absolute unless explicitly stated otherwise.
Named numerical constants introduced below, such as the constants \(\ell_q\) in
the scalar primitive bounds, are displayed explicitly in quantitative rates
rather than absorbed into this asymptotic notation.

\section{Related Work}

The literature surrounding first-order nonconvex optimization and higher-order
smoothness is broad, so we discuss only the strands most directly related to our
lower-bound result; the references below are necessarily selective.

\paragraph{Higher-order oracle models.}
A separate line of work studies algorithms that can query higher-order derivative information, rather than only function values and gradients. Classical trust-region methods and regularized Newton methods form the basic second-order toolkit \citep{conn2000trustregion,cartis2010complexity}. In the Hessian-Lipschitz setting, cubic regularization builds a local quadratic model with a cubic penalty and achieves the \(O(\epsilon^{-3/2})\) complexity for finding first-order stationary points \citep{nesterov2006cubic,cartis2010complexity}; related work also characterizes the complexity of reaching second-order stationarity \citep{cartis2012secondorder}. More generally, regularized \(p\)th-order Taylor methods achieve \(O(\epsilon^{-(p+1)/p})\) under Lipschitz \(p\)th derivatives \citep{birgin2017worstcase}, and worst-case analyses for second-order methods further clarify the optimality of this oracle model \citep{cartis2017worstcase}. These rates are matched by lower bounds for algorithms with access to derivatives up to order \(p\) \citep{carmon2020lowerfirstorderI}. Recent mixed-oracle work further studies tradeoffs between gradient and Hessian queries; in particular, finite-difference Hessian approximations yield improved low-dimensional gradient complexity, and even a small number of Hessian queries can change the gradient-query exponent \citep{adil2025balancing}. This higher-order-oracle theory is complementary to our setting: we impose higher-order smoothness on the objective, but the algorithm remains first-order.

\paragraph{Stochastic, finite-sum, and local-minimum variants.}
There is also a substantial stochastic, finite-sum, and approximate-local-minimum counterpart to the oracle-complexity literature. For second-order stationarity, early stochastic saddle-escaping results show that noisy or perturbed gradient methods can avoid strict saddles and reach approximate local minima \citep{ge2015escaping,jin2017escapesaddle,jin2019nonconvexml}. Under Hessian-Lipschitz structure, Hessian-vector and gradient-only algorithms obtain faster approximate-local-minimum guarantees through cubic regularization, accelerated saddle escaping, and negative-curvature extraction \citep{agarwal2017finding,carmon2018acceleratedmethods,jin2018acceleratedsaddle,zhang2021escapesaddle,allenzhu2018neon2,liu2018adaptive,xu2017neonplus}. Stochastic and finite-sum refinements combine perturbed SGD, variance reduction, stochastic cubic regularization, and third-order smoothness to improve sample or gradient complexity for local-minimum finding \citep{xu2018firstorderstochastic,fang2018spider,fang2019sharp,li2019ssrgd,tripuraneni2018stochasticcubic,yu2018thirdorder,zhou2018findinglocalminima,zhou2018svrcubic,zhou2020srvrn,zhou2020snvr}. More recently, an online-to-nonconvex conversion viewpoint gives deterministic and stochastic upper bounds through a common framework \citep{cutkosky2023online}. On the lower-bound side, \citet{arjevani2020secondorderstochastic} characterize the power and limitations of stochastic Hessian-vector and higher-order oracles, showing that second-order information does not remove the intrinsic stochastic barrier in the same way it can in deterministic optimization. In finite-sum models, lower bounds for higher-order smooth nonconvex objectives were developed by \citet{emmenegger2021higherorderfinitesum} and then sharpened in a dimension-free form by \citet{zhou2022dimensionfreefinitesum}. In the bounded-variance stochastic-gradient model, \citet{arjevani2022lowerstochastic} prove tight lower bounds showing that SGD is minimax optimal for finding first-order stationary points, while stronger mean-squared smoothness assumptions recover the variance-reduction rate. These works address sample, noise, component-oracle, or saddle-escaping complexity; they are therefore complementary to the deterministic, dimension-free full-gradient lower bound studied here.

\section{Preliminaries}\label{sec:preliminaries}
We follow the oracle-complexity framework of
\citet{carmon2020lowerfirstorderI,carmon2017lowerfirstorder}.

\paragraph{Function classes.}
Following Def.~1 of \citet{carmon2020lowerfirstorderI}, let \(p\ge1\) and
\(L_p>0\). We say that \(f:\mathbb R^d\to\mathbb R\) has \(L_p\)-Lipschitz
\(p\)th-order derivatives if it is \(p\) times continuously differentiable and,
for every \(x\in\mathbb R^d\) and every \(v\in\mathbb R^d\) with \(\|v\|=1\),
the directional projection \(f_{x,v}(t):=f(x+tv)\) satisfies
\[
\left|
f_{x,v}^{(p)}(t)-f_{x,v}^{(p)}(t')
\right|
\le
L_p|t-t'|
\qquad
\text{for all }t,t'\in\mathbb R.
\]
Let $\Delta>0$. The class $\mathcal F_p(\Delta,L_p)$ denotes the union, over $d\in\mathbb N$, of all $C^\infty$ functions $f:\mathbb R^d\to\mathbb R$ with $L_p$-Lipschitz $p$th-order derivatives and $f(0)-\inf_x f(x)\le\Delta$. For positive $L_1,\ldots,L_p$, define
\[
\mathcal F_{1:p}(\Delta,L_1,\ldots,L_p)
:=
\bigcap_{q=1}^p \mathcal F_q(\Delta,L_q).
\]

\paragraph{First-order oracle complexity.}
As in Eq.~(7) of \citet{carmon2020lowerfirstorderI}, a deterministic
first-order method \(\mathsf A\) starts from \(x^{(0)}=0\) and, when applied to
\(f:\mathbb R^d\to\mathbb R\), generates iterates \(x^{(t)}\in\mathbb R^d\) such
that \(x^{(t+1)}\) is a deterministic function of the previous oracle answers
\[
\left(f(x^{(0)}),\nabla f(x^{(0)}),\ldots,f(x^{(t)}),\nabla f(x^{(t)})\right).
\]
Define
\[
T_\epsilon(\mathsf A,f)
:=
\inf\{t\ge0:\|\nabla f(x^{(t)})\|\le\epsilon\}.
\]
We use the convention $\inf\emptyset=\infty$.
For a class of deterministic first-order methods $\mathcal A$ and a function class $\mathcal F$, write
\[
T_\epsilon(\mathcal A,\mathcal F)
:=
\inf_{\mathsf A\in\mathcal A}\sup_{f\in\mathcal F}T_\epsilon(\mathsf A,f),
\]
up to the immaterial additive constant coming from whether the initial point is counted as a query.

\paragraph{Zero-respecting methods.}
Following Eq.~(5) of \citet{carmon2017lowerfirstorder}, a sequence
\(\{x^{(t)}\}_{t\ge0}\) is first-order zero-respecting with respect to \(f\) if
\[
\operatorname{supp}(x^{(t)})
\subseteq
\bigcup_{s<t}\operatorname{supp}(\nabla f(x^{(s)}))
\qquad\text{for all }t\ge0.
\]
A first-order method is zero-respecting if, for every \(f\), its generated
iterate sequence is zero-respecting with respect to \(f\). By contrast, a
function $f:\mathbb R^d\to\mathbb R$ is a classical first-order zero-chain
\citep[Def.~3]{carmon2017lowerfirstorder} if, for every $i\in[d]$ and
$x\in\mathbb R^d$,
\[
\operatorname{supp}(x)\subseteq\{1,\ldots,i-1\}
\quad\Longrightarrow\quad
\operatorname{supp}(\nabla f(x))\subseteq\{1,\ldots,i\}.
\]

\paragraph{Primitive functions.}
We also fix two elementary primitives used in the construction. For
\(m\in\mathbb N\), let \(\mathsf P_m\in\mathbb R^{m\times m}\) be the path
Laplacian in tridiagonal form, and let \(H_{\gamma,L}^{(m)}\) be the
corresponding massive path operator:
\begin{equation}
\label{eq:path-primitive}
\mathsf P_1:=0,\qquad
\mathsf P_m:=
\begin{pmatrix}
1 & -1 &        &        &   \\
-1& 2  & -1     &        &   \\
  & \ddots & \ddots & \ddots &   \\
  &        & -1 & 2  & -1\\
  &        &    & -1 & 1
\end{pmatrix}
\quad(m\ge2),\qquad
H_{\gamma,L}^{(m)}:=\gamma I_m+L\mathsf P_m .
\end{equation}
For \(r\ge1\), let \(\Upsilon_r\) be the scalar potential from Eq.~(10) of
\citet{carmon2017lowerfirstorder}:
\begin{equation}
\label{eq:scalar-potential}
\Upsilon_r(x):=
120\int_1^x \frac{t^2(t-1)}{1+(t/r)^2}\,dt .
\end{equation}

We collect the elementary bounds on these two primitives for later use.

\begin{proposition}[Primitive bounds]\label{prop:primitive-bounds}
The primitives in \eqref{eq:path-primitive} and \eqref{eq:scalar-potential}
satisfy the following bounds.
\begin{itemize}[leftmargin=*,itemsep=2pt,topsep=2pt]
\item For every \(m\in\mathbb N\) and \(\gamma,L>0\),
\(0\preceq \mathsf P_m\preceq 4I\) and
\(\gamma I\preceq H_{\gamma,L}^{(m)}\preceq(\gamma+4L)I\).
\item Lemma~2 of
\citet{carmon2017lowerfirstorder} gives constants \(\ell_q>0\), \(q\ge1\),
which may be chosen so that
\(\ell_q\le \exp(\frac32 q\log q+C_\ell q)\) for a numerical constant
\(C_\ell<\infty\). For every \(r\ge1\), we have
\(\Upsilon_r\in C^\infty(\mathbb R)\),
\(\Upsilon_r'(0)=\Upsilon_r'(1)=0\),
\(m_{\Upsilon_r}:=\inf_x\Upsilon_r(x)=\Upsilon_r(1)=0\), and
\(\Upsilon_r(0)\le10\). Moreover, for every \(q\ge1\),
\[
    \operatorname{Lip}\bigl(\Upsilon_r^{(q)}\bigr)
    \le
    \ell_q r^{3-q},
    \qquad
    \left\|\Upsilon_r^{(q+1)}\right\|_\infty
    \le
    \ell_q r^{3-q}.
\]
\end{itemize}
\end{proposition}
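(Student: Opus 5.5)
The two bullets of Proposition~\ref{prop:primitive-bounds} are independent, so I will prove them separately: the first by a direct quadratic-form computation, the second by reducing to Lemma~2 of \citet{carmon2017lowerfirstorder} plus an explicit calculation of the potential.

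\emph{Path operator.} For \(m\ge2\), the tridiagonal matrix \(\mathsf P_m\) is the Laplacian of the path on \(m\) vertices, so for \(x\in\mathbb R^m\)
\[
x^\top\mathsf P_m x=\sum_{i=1}^{m-1}(x_i-x_{i+1})^2 .
\]
This is nonnegative, and since \((a-b)^2\le 2a^2+2b^2\) and each coordinate appears in at most two edges, it is at most \(4\|x\|^2\). For \(m=1\) the matrix is zero and both bounds are trivial. Adding \(\gamma I\) and using \(L>0\) gives
\[
\gamma I\preceq H_{\gamma,L}^{(m)}\preceq(\gamma+4L)I .
\]

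\emph{Scalar potential: elementary properties.} The integrand \(120\,t^2(t-1)/(1+(t/r)^2)\) is a rational function with no real poles, so \(\Upsilon_r\in C^\infty(\mathbb R)\). Its derivative is
\[
\Upsilon_r'(x)=120\,\frac{x^2(x-1)}{1+(x/r)^2},
\]
which vanishes at \(x=0\) and \(x=1\). The sign of \(\Upsilon_r'\) is the sign of \(x-1\), with a double zero at \(0\) and no sign change there. Hence \(\Upsilon_r\) is nonincreasing on \((-\infty,1]\) and nondecreasing on \([1,\infty)\), so its infimum is \(\Upsilon_r(1)=0\). For the value at the origin, the denominator is at least \(1\), so
\[
\Upsilon_r(0)=120\int_0^1\frac{t^2(1-t)}{1+(t/r)^2}\,dt\le 120\int_0^1 t^2(1-t)\,dt=120\cdot\tfrac1{12}=10 .
\]

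\emph{Scalar potential: derivative bounds.} Lemma~2 of \citet{carmon2017lowerfirstorder} directly provides
\[
\bigl\|\Upsilon_r^{(q+1)}\bigr\|_\infty\le\ell_q r^{3-q}
\]
with constants satisfying \(\ell_q\le\exp(\tfrac32 q\log q+C_\ell q)\). If their stated growth rate is in a different form, I will re-derive it from their proof. Their argument writes the derivatives of \(t^2(t-1)/(1+(t/r)^2)\) through the substitution \(t=rs\) and a Cauchy-estimate or partial-fraction bound on \(1/(1+s^2)\), whose \(k\)-th derivative is at most about \(k!\). Stirling's formula then converts the resulting factorial factors into the \(\exp(\tfrac32 q\log q+O(q))\) form, and the powers of \(r\) come from scaling. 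The Lipschitz bound follows from the mean value theorem:
\[
\operatorname{Lip}\bigl(\Upsilon_r^{(q)}\bigr)=\bigl\|\Upsilon_r^{(q+1)}\bigr\|_\infty .
\]
The only real obstacle is pinning down the explicit growth of \(\ell_q\). Everything else is a short computation.
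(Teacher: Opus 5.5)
Your proposal is correct and matches the paper. The paper gives no separate argument: it treats the path-operator bounds as the elementary quadratic-form fact \(x^\top\mathsf P_m x=\sum_i(x_i-x_{i+1})^2\in[0,4\|x\|^2]\) and takes the properties of \(\Upsilon_r\) from Lemma~2 of \citet{carmon2017lowerfirstorder}, and your explicit checks of \(\Upsilon_r'(0)=\Upsilon_r'(1)=0\), of \(\inf\Upsilon_r=\Upsilon_r(1)=0\) via monotonicity, and of \(\Upsilon_r(0)\le10\) are all valid. One small point: Carmon et al.\ state the Lipschitz bound on \(\Upsilon_r^{(q)}\) directly, so your identity \(\operatorname{Lip}(\Upsilon_r^{(q)})=\|\Upsilon_r^{(q+1)}\|_\infty\) is what produces the sup-norm form, rather than the reverse; since the two are equivalent, this is harmless.
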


\section{Block-Chain Hard Instance}\label{sec:hard-instance}
We now build the hard instance, starting from the limitation of the scalar-chain template used in prior first-order lower bounds.

\paragraph{Existing lower bound revisited.}
We begin by recalling the scalar chain underlying the first-order lower bound of
\citet{carmon2017lowerfirstorder}. For a fixed \(r\ge1\), its original
unscaled form has a multiplier \(0<\mu\le1\) on the scalar nonconvex potential:
\[
    \phi_\mu(u)
    :=
    \frac{\sqrt\mu}{2}(u_1-1)^2
    +
    \frac12\sum_{s=1}^{N}(u_{s+1}-u_s)^2
    +
    \mu\sum_{s=1}^{N}\Upsilon_r(u_s),
    \qquad
    u\in\mathbb R^{N+1}.
\]

For \(s\ge2\), the cross term \(-u_{s-1}u_s\) is the only transmission from
phase \(s-1\) to phase \(s\), while the separable potential
\(\mu\Upsilon_r(u_s)\) supplies the on-site nonconvex forcing. The following
residual certificate is the key point used by the lower-bound argument.

\begin{proposition}[Scalar transition lower bound, Lemma 3, \citealt{carmon2017lowerfirstorder}]\label{prop:scalar-transition-lower-bound}
There exists a numerical constant \(c_0:=1/4\) such that, for every \(N\ge1\),
every \(r\ge1\), every \(0<\mu\le1\), and every \(u\in\mathbb R^{N+1}\) with
\(u_N=u_{N+1}=0\), the scalar chain \(\phi_\mu\) defined above satisfies
\[
\|\nabla\phi_\mu(u)\|_2
\ge
c_0\mu^{3/4}.
\]
\end{proposition}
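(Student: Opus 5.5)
Since this is Lemma~3 of \citet{carmon2017lowerfirstorder} specialized to the unscaled chain $\phi_\mu$, the plan is to re-derive it directly from the coordinatewise gradient. With the convention $u_0:=u_1$, the components are
\[
\partial_1\phi_\mu=\sqrt\mu(u_1-1)-(u_2-u_1)+\mu\Upsilon_r'(u_1),\qquad
\partial_s\phi_\mu=-(u_{s+1}-2u_s+u_{s-1})+\mu\Upsilon_r'(u_s)\ (2\le s\le N),
\]
and $\partial_{N+1}\phi_\mu=u_{N+1}-u_N$. I argue by contradiction. Suppose $\|\nabla\phi_\mu(u)\|<c_0\mu^{3/4}$, and write $g_s:=\partial_s\phi_\mu$. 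Then $u$ approximately solves the discrete Newton equation $u_{s+1}-2u_s+u_{s-1}=\mu\Upsilon_r'(u_s)-g_s$. It starts with slope $u_2-u_1\approx\sqrt\mu(u_1-1)$ and ends at rest at $0$. The idea is that such a trajectory must pass through $(0,1)$, where $\Upsilon_r'$ is strictly negative, and this is incompatible with the small residual.

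\textbf{Step 1: locate a transition window.}
Define $j$ as the last index $s\le N$ with $u_s\ge 1/2$, and handle separately the case where no such index exists. In that case $u_1<1/2$, so $\sqrt\mu(u_1-1)\le-\sqrt\mu/2$. Summing $g_1+\dots+g_s$ telescopes the chain terms to $u_s-u_{s+1}$, so the slopes stay $\lesssim-\sqrt\mu$ while $u$ remains in $(-\infty,1]$, where $\Upsilon_r'\le0$. Such slopes are incompatible with ending at $u_N=u_{N+1}=0$ unless the accumulated error $\sum_s|g_s|$ is large. Since the window may be long, the error has to be controlled via Cauchy--Schwarz on the window length $K$, namely $\sum|g_s|\le\sqrt K\,\|g\|$. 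This is exactly where the exponent $3/4$ enters.

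\textbf{Step 2: the main case, via a discrete energy identity.}
In the main case $u_j\ge1/2$ and $u_s<1/2$ for $j<s\le N$, with $u_N=u_{N+1}=0$. Multiply the Newton equation by $(u_{s+1}-u_{s-1})/2$ and sum over the segment. This gives conservation of $E_s:=\tfrac12(u_{s+1}-u_s)^2-\mu\Upsilon_r(u_s)$, up to second-order discretization terms and the error $\sum_s g_s(u_{s+1}-u_{s-1})/2$. At the end of the chain $E=-\mu\Upsilon_r(0)$. The drop from $u_j\ge1/2$ to $0$ forces $\mu\bigl(\Upsilon_r(0)-\Upsilon_r(u_j)\bigr)\gtrsim\mu$, because $\Upsilon_r$ is decreasing on $[0,1]$ with a definite gap between $\Upsilon_r(0)$ and $\Upsilon_r(1/2)$, uniformly in $r\ge1$. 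The kinetic term alone cannot absorb this, so a $\Theta(\mu)$ energy mismatch must be paid by the error. Descent through $[1/4,1/2]$ happens at speed $\lesssim\sqrt\mu$, which makes the relevant segment have length $K\asymp\mu^{-1/4}\cdot\mu^{-1/4}$, of order the natural time scale. The error is then at most
\[
\|g\|\sqrt K\cdot\max_s|u_{s+1}-u_{s-1}|\lesssim c_0\mu^{3/4}\cdot\mu^{-1/4}\cdot\sqrt\mu=c_0\mu,
\]
which contradicts the $\Theta(\mu)$ mismatch for $c_0=1/4$.

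\textbf{Main obstacle.}
The delicate step is making Step~2 rigorous. It requires bounding the speed $|u_{s+1}-u_s|\lesssim\sqrt\mu$ uniformly, which itself follows from the energy bound but must be established first to avoid circularity. It also requires controlling the window length $K$ and the discretization errors, using the bounds $\|\Upsilon_r''\|_\infty,\|\Upsilon_r'''\|_\infty\le\ell_1 r^2,\ell_2 r$ from Proposition~\ref{prop:primitive-bounds}. These bounds must be applied only on the region $u\in[0,1]$, where $\Upsilon_r$ is $r$-independent up to constants. I would first try to follow the original case split in Lemma~3 of \citet{carmon2017lowerfirstorder} verbatim, which uses thresholds on $u_s$ together with the fact that $\Upsilon_r'$ is bounded away from $0$ on a middle interval of $(0,1)$. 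I would fall back on the energy argument above only if constants need adjusting for the present normalization.
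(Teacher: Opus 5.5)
The paper gives no proof of its own here. It imports Lemma~3 of \citet{carmon2017lowerfirstorder} for exactly this chain, so your closing fallback (follow that lemma verbatim) is what the paper does. Your heuristic is also the right picture: an approximate discrete Newton trajectory must cross $(0,1)$, where $\Upsilon_r'<0$. The energy re-derivation you lead with, however, has a genuine gap, not just missing detail.

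Multiplying the recursion by $(u_{s+1}-u_{s-1})/2$ telescopes the kinetic part exactly, but not the potential part. Summing over $s=j+1,\dots,N$ leaves, besides the $g$-error $\sum_s g_s(u_{s+1}-u_{s-1})/2$, two further terms: a boundary term $\frac{\mu}{2}\Upsilon_r'(u_j)(u_{j+1}-u_j)$, and trapezoidal remainders $\mu\sum_s\Upsilon_r'''(\xi_s)(u_{s+1}-u_s)^3/12$. In your main case ($u_j\in[1/2,1]$, $u_{j+1}<u_j$) the boundary term is nonnegative. Combined with $-\frac12(u_{j+1}-u_j)^2$, it is only bounded by $\frac{\mu^2}{8}\Upsilon_r'(u_j)^2$, which exceeds $20\mu^2$ at $u_j=0.6$ for every $r\ge1$. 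The mismatch $\mu(\Upsilon_r(0)-\Upsilon_r(u_j))$ is at most $10\mu$. So at $\mu=1$ this correction alone can balance the whole mismatch, and no contradiction follows. The scheme is intrinsically perturbative: it closes only for $\mu$ below a small absolute constant, and its untracked constants cannot give $c_0=1/4$. This is not a corner case, because the paper applies the proposition with $\mu=1$ in the proof of Lemma~\ref{thm:gradient-lower-bound}.

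Several other steps also fail as written.

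\textbf{The choice of $j$.} Nothing prevents $u_j>1$. Since $\Upsilon_r$ is unbounded on $[1,\infty)$, $\Upsilon_r(u_j)$ can then exceed $\Upsilon_r(0)$, and there is no mismatch at all. Also $u_{j+1}$ may be very negative, so the window leaves $[0,1]$, and the only available control $\|\Upsilon_r'''\|_\infty\le\ell_2 r$ destroys uniformity in $r$.

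\textbf{The window length $K$.} An upper bound $|u_{s+1}-u_s|\lesssim\sqrt\mu$ gives a \emph{lower} bound on the time spent crossing $[1/4,1/2]$, not the upper bound on $K$ you use. Moreover, the $g$-error is summed over the whole window $\{j+1,\dots,N\}$, which can be arbitrarily long, since the chain can idle near $0$, where $\Upsilon_r'$ vanishes to second order. What you would actually need is $\sum_s(u_{s+1}-u_s)^2\lesssim\mu^{1/2}$, that is, a speed bound together with a total-variation bound. Neither is derived.

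\textbf{Step~1.} Using only $\Upsilon_r'\le0$, the slope bound $u_{s+1}-u_s\le-\sqrt\mu/2+\sqrt s\,\|g\|$ lasts only $O(\mu^{-1/2})$ steps. After that, Cauchy--Schwarz over the chain gives a bound decaying like $N^{-1/2}$. This loss is unavoidable with sign information alone: the purely quadratic chain ($\Upsilon_r'\equiv0$) has points with $u_1<1/2$, $u_N=u_{N+1}=0$, and gradient norm tending to $0$ as $N\to\infty$. An $N$-independent bound needs a quantitative lower bound on $|\Upsilon_r'|$, for instance $\Upsilon_r'\le-1$ on $(-\infty,-0.1]\cup[0.1,0.9]$ for all $r\ge1$, which you never invoke.

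A self-contained proof should avoid expanding $\Upsilon_r$ altogether. It should work only with the exact telescoped identities $\sum_{i\le s}g_i=\sqrt\mu(u_1-1)-(u_{s+1}-u_s)+\mu\sum_{i\le s}\Upsilon_r'(u_i)$, and their tail analogues anchored at $u_N=u_{N+1}=0$, together with sign and size information on $\Upsilon_r'$. These carry no discretization error and are uniform in $\mu\le1$ and $r\ge1$. Otherwise, simply cite the lemma, as the paper does.
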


Thus, as long as a zero-respecting method has not reached the tail coordinate,
we have \(u_N=0\), and the scalar transition lower bound certifies a large
gradient norm. To turn this residual certificate into an
\(\epsilon\)-stationarity obstruction under prescribed smoothness budgets,
\citet{carmon2017lowerfirstorder} consider the scaled instance
\(f(x):=\lambda\sigma^2\phi_\mu(x/\sigma)\), with \(N=T\). The parameters
\(\lambda,\sigma,\mu,r\) are then calibrated so that the scaled chain belongs to
the target smoothness class, while the chain length \(T\) is set by the available
function-value budget; with these choices, all points whose tail coordinates
remain hidden still have gradient norm larger than \(\epsilon\).  

\paragraph{\emph{Block-chain} hard instance.}
One key parameter in the scalar hard instance \(\phi_\mu\) is \(\mu\), which
sets the strength of the nonconvex onsite potential relative to the quadratic
chain. This parameter must be tuned carefully in order to satisfy higher-order
smoothness constraints while retaining a nontrivial gradient certificate. By
contrast, \(\lambda\) and \(\sigma\) are global amplitude and length-scale
parameters: they rescale all derivative orders in a coupled way, and therefore
cannot by themselves provide the selective balance needed for higher-order
calibration. \citet{carmon2017lowerfirstorder} also extend the construction
beyond quadratic activations and obtain analogous gradient lower bounds, but
the resulting scalar balancing still leaves a gap between the best known
first-order upper bound \(\epsilon^{-7/4}\) for \(p=2\) and their calibrated
lower bound \(\epsilon^{-12/7}\).

Our construction closes this gap through two observations.

\begin{itemize}[leftmargin=*]

    \item \textbf{Tune smoothness through a linear pullback.}
    Instead of relying only on the scalar coefficient \(\mu\), we introduce an
    additional balancing mechanism by applying the nonconvex potential after a
    linear observation map. At the level of intuition, this amounts to replacing
    the coordinatewise potential by
\begin{align}
\sum_{s=1}^{N}\Upsilon_r(u_s) \rightarrow \sum_{s=1}^{N}\Upsilon_r(a_s^\top u)
\end{align}
where \(u\mapsto(a_1^\top u,\ldots,a_N^\top u)\) is a \emph{linear operator}.
Smoothness constraints are stable under such pullbacks: the relevant derivative
operator norms are multiplied only by powers of the operator norm of this
linear map. Thus the geometry of the observation map gives a new way to tune
higher-order smoothness without changing the scalar residual certificate.

\item \textbf{Separate revelation from observation.}
    An arbitrary linear observation map would destroy the lower-bound argument:
    the gradient of \(\Upsilon_r(a_s^\top u)\) points in the direction \(a_s\),
    which could reveal information outside the zero-chain order. The remedy is
    to separate the coordinates that drive the oracle revelation from the
    coordinates observed by the nonconvex potential. Informally, the construction
    uses a map of the form
\begin{align}
u\rightarrow (u, a_s^\top u),
\end{align}
where the first component keeps the original chain-like revelation structure,
while the second component feeds the nonconvex potential and supplies the
high-order smoothness scaling. The formal block construction below implements
this separation by making the nonconvex readout compatible with the reveal
order.

\end{itemize}

With this intuition in mind, we now define the block chain.

\begin{definition}[Block chain]\label{def:function-class}
Let $M,N\in\mathbb N$, let $\gamma,L,\lambda,\kappa,\tau>0$, and let $a\in\mathbb R^M$ be a nonzero suffix vector, meaning that for some $j_\star\in[M]$,
\(\operatorname{supp}(a)\subseteq\{j_\star,\ldots,M\}\).
Set \(d:=MN\), and identify \(\mathbb R^d\) with \((\mathbb R^M)^N\).
For a block vector \(z=(z_1,\ldots,z_N)\in(\mathbb R^M)^N\), the scalar chain variable in phase \(s\) is the suffix readout
\(u_s:=a^\top z_s\), with \(u_0:=1\). The entry coordinate of phase \(s\) is
\(x_s:=e_1^\top z_s\).
The previous phase enters the next block through \(x_s\), while the current phase is read only through the suffix readout \(u_s\). For \(r\ge1\), define
\begin{align}
f_{M,N,\gamma,L,a,\lambda,\kappa,\tau,r}(z)
&:=
\sum_{s=1}^N
\left[
    \frac12 z_s^\top H_{\gamma,L}^{(M)}z_s
    -
    \lambda u_{s-1}x_s
    +
    \frac{\kappa}{2}u_s^2
    +
    \tau\Upsilon_r(u_s)
\right].
\label{eq:block-chain-expanded}
\end{align}

For the normalized member of this family, set
\begin{equation}
\label{eq:scalar-reduction-geometry}
    e:=e_1,\qquad
    \rho:=e^\top\!\left(H_{\gamma,L}^{(M)}\right)^{-1}e,\qquad
    \alpha:=a^\top\!\left(H_{\gamma,L}^{(M)}\right)^{-1}e,\qquad
    \beta:=a^\top\!\left(H_{\gamma,L}^{(M)}\right)^{-1}a.
\end{equation}
When \(\alpha>0\), choose
\begin{equation}
\label{eq:scalar-reduction-normalization}
    \lambda:=\frac1\alpha,\qquad
    \tau:=\frac1\beta,\qquad
    \kappa:=\lambda^2\rho .
\end{equation}
For \(r\ge1\), define
\begin{equation}
\label{eq:normalized-hard-instance}
    \bar f_{M,N,\gamma,L,a,r}
    :=
    f_{M,N,\gamma,L,a,\lambda,\kappa,\tau,r},
\end{equation}
where \(\lambda,\tau,\kappa\) are given by
\eqref{eq:scalar-reduction-normalization}.
\end{definition}

\begin{remark}[Entry-to-suffix coupling]
The term \(-\lambda u_{s-1}x_s\) is the structural replacement for the scalar
transmission term \(-u_{s-1}u_s\). If one coupled successive readouts directly
after the substitution \(u_s=a^\top z_s\), the gradient would point immediately
in the suffix direction \(a\), allowing a zero-respecting method to reach the
readout coordinates without traversing the block. We instead inject the previous
scalar state through the entry coordinate \(x_s=e_1^\top z_s\). The quadratic
block then propagates this signal toward the suffix readout, so the scalar chain
is lifted to a block chain while preserving the layer-by-layer revelation
structure. 
\end{remark}

\begin{remark}[Normalization]
The normalization in \eqref{eq:scalar-reduction-normalization} is chosen so
that the high-dimensional block chain reduces, at the level of the scalar
readouts \(u_s=a^\top z_s\), to the same scale as the scalar chain from
Proposition~\ref{prop:scalar-transition-lower-bound}. The quantities
\(\rho,\alpha,\beta\) are the Green-kernel responses of the quadratic block:
\(\alpha\) measures how an input through the entry coordinate reaches the suffix
readout, \(\beta\) measures the self-response of that readout, and \(\rho\)
measures the self-response of the entry coordinate. Thus
\(\lambda=1/\alpha\) normalizes the transmission from \(u_{s-1}\) to \(u_s\),
\(\tau=1/\beta\) puts the nonconvex potential on the same scalar scale, and
\(\kappa=\lambda^2\rho\) compensates for the curvature introduced by injecting
through the entry coordinate. In this sense, \(\bar f_{M,N,\gamma,L,a,r}\) is
the block-chain member whose scalar projection has the Carmon-chain
normalization.
\end{remark}

We now collect the basic estimates for the block chain that will be used later
in the lower-bound calibration.

\begin{lemma}[Boundedness and smoothness of normalized hard instances]\label{thm:boundedness-smoothness}
Fix \(p\ge2\), \(M,N\in\mathbb N\), \(\gamma,L>0\), \(r\ge1\), and a nonzero
vector \(a\in\mathbb R^M\). Assume \(\alpha>0\). Then the normalized
hard-instance family from \eqref{eq:normalized-hard-instance} satisfies
\[
    \operatorname{Lip}\!\left(\nabla \bar f_{M,N,\gamma,L,a,r}\right)
    \le
    \gamma+4L
    +
    \frac{2\|a\|}{\alpha}
    +
    \left(
        \frac{\rho}{\alpha^2}
        +
        \frac{\ell_1r^2}{\beta}
    \right)\|a\|^2 .
\]
For \(q=2,\ldots,p\),
\[
    \operatorname{Lip}\!\left(D^q\bar f_{M,N,\gamma,L,a,r}\right)
    \le
    \frac{\ell_qr^{3-q}\|a\|^{q+1}}{\beta}.
\]
Moreover,
\[
    \bar f_{M,N,\gamma,L,a,r}(0)
    -
    \inf \bar f_{M,N,\gamma,L,a,r}
    \le
    \frac{10N}{\beta}
    +
    \frac{\rho}{2\alpha^2}.
\]
\end{lemma}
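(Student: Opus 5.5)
The plan is to split \(\bar f:=\bar f_{M,N,\gamma,L,a,r}\) into a quadratic part and a nonconvex part and bound each derivative order separately:
\[
Q(z):=\sum_{s=1}^N\Bigl[\tfrac12 z_s^\top H z_s-\lambda u_{s-1}x_s+\tfrac\kappa2u_s^2\Bigr],\qquad
U(z):=\tau\sum_{s=1}^N\Upsilon_r(a^\top z_s),
\]
where \(H:=H_{\gamma,L}^{(M)}\). Since \(Q\) is quadratic, \(D^qQ\) vanishes for \(q\ge2\), so only \(U\) contributes to the higher-order claims.

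\textbf{Gradient Lipschitz bound.} I bound \(\|\nabla^2 Q\|\) term by term using the triangle inequality.
\begin{itemize}
\item The block-diagonal part \(\operatorname{diag}(H,\ldots,H)\) has norm at most \(\gamma+4L\), by Proposition~\ref{prop:primitive-bounds}.
\item The coupling \(-\lambda\sum_s (a^\top z_{s-1})(e_1^\top z_s)\), with \(u_0=1\) contributing only a linear term, is a bilinear form given by a block off-diagonal matrix with blocks \(a e_1^\top\). Its symmetric Hessian has norm at most \(2\lambda\|a\|=2\|a\|/\alpha\). The factor \(2\) is a safe bound; the shift structure in fact gives \(\lambda\|a\|\).
\item The term \(\tfrac\kappa2\sum_s(a^\top z_s)^2\) has Hessian \(\kappa\operatorname{diag}(aa^\top)\), with norm \(\kappa\|a\|^2=\rho\|a\|^2/\alpha^2\).
\end{itemize}
For \(U\), write \(U=g\circ A\), where \(A z=(a^\top z_s)_s\) satisfies \(\|A\|=\|a\|\) and \(g(u)=\tau\sum_s\Upsilon_r(u_s)\) is separable. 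Then
\[
D^qU(z)[v_1,\ldots,v_q]=\tau\sum_s\Upsilon_r^{(q)}(u_s)\prod_{i}a^\top v_{i,s}.
\]
For \(q=1\), I use \(\|\Upsilon_r''\|_\infty\le\ell_1r^2\) to get \(\|\nabla^2U\|\le\tau\ell_1r^2\|a\|^2=\ell_1r^2\|a\|^2/\beta\). Summing the four bounds gives the first claim.

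\textbf{Higher-order derivatives.} For \(2\le q\le p\), I need \(|D^qU(z)-D^qU(z')|\) over unit vectors \(v_i\). Set \(w_{i,s}:=a^\top v_{i,s}\), so that \(\sum_s w_{i,s}^2\le\|a\|^2\). Then
\[
|D^qU(z)-D^qU(z')|\le\tau\,\ell_qr^{3-q}\sum_s|u_s-u'_s|\prod_i|w_{i,s}|.
\]
I bound \(\prod_i|w_{i,s}|\le|w_{1,s}||w_{2,s}|\max_s\prod_{i\ge3}|w_{i,s}|\), using the sup-norm bound \(|w_{i,s}|\le\|a\|\). This gives
\[
\sum_s|u_s-u_s'|\,|w_{1,s}|\,|w_{2,s}|\le\max_s|u_s-u_s'|\,\|w_1\|\,\|w_2\|\le\|a\|\,\|z-z'\|\,\|a\|^2,
\]
and the remaining factors contribute \(\|a\|^{q-2}\). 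The total is \(\ell_qr^{3-q}\|a\|^{q+1}\|z-z'\|/\beta\), as claimed. This \(\ell^\infty\)/Hölder bookkeeping, which keeps the bound independent of \(N\), is the one step needing care. It is also why \(q\ge2\) is required: with \(q=1\) we have only one \(w\)-factor, and the Cauchy–Schwarz step above is unavailable.

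\textbf{Initial gap.} At zero, \(\bar f(0)=\tau N\Upsilon_r(0)\le10N/\beta\). For the infimum, I use \(\Upsilon_r\ge0\) to drop \(U\), and bound \(\inf Q\) from below. The terms of \(Q\) couple \(s\) to \(s-1\), so I rewrite by completing squares. For each \(s\), minimizing \(\tfrac12z_s^\top Hz_s-\lambda u_{s-1}e_1^\top z_s\) over \(z_s\) yields \(-\tfrac12\lambda^2u_{s-1}^2\rho\). Since the \(z_s\) appear elsewhere only through \(\tfrac\kappa2u_s^2\ge0\), I bound termwise: the \(s\)-th block quadratic is at least \(-\tfrac{\kappa}{2}u_{s-1}^2\), using \(\kappa=\lambda^2\rho\). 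Summing and telescoping, each \(-\tfrac\kappa2u_{s-1}^2\) for \(s\ge2\) is cancelled by \(+\tfrac\kappa2u_{s-1}^2\) from phase \(s-1\). This leaves
\[
Q\ge-\tfrac\kappa2u_0^2+\tfrac\kappa2u_N^2\ge-\tfrac{\rho}{2\alpha^2}.
\]
Therefore \(\bar f(0)-\inf\bar f\le 10N/\beta+\rho/(2\alpha^2)\).
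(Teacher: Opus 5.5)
Your proof is correct and is essentially the paper's argument: the same quadratic/nonconvex split, the same term-by-term Hessian bound, the same completing-the-square telescoping $Q\ge\frac{\kappa}{2}(u_N^2-u_0^2)\ge-\frac{\rho}{2\alpha^2}$ for the gap, and for $q\ge2$ the same $N$-independent H\"older estimate. The only difference there is bookkeeping: you bound $D^qU(z)-D^qU(z')$ directly through $\operatorname{Lip}(\Upsilon_r^{(q)})$ with an $\ell^\infty/\ell^2$ split, while the paper bounds $\|D^{q+1}U\|_{\rm op}$ with an all-$\ell^2$ H\"older inequality and integrates along segments, which amounts to the same bound.

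Three side remarks are inaccurate, though none is used in the argument:
\begin{itemize}
\item $D^2Q$ is constant, not zero; only its Lipschitz constant vanishes.
\item The coupling Hessian can have norm close to $2\lambda\|a\|$ when $a$ has a component along $e_1$. For $M=1$ it is $\lambda\|a\|$ times a path adjacency matrix, of norm $2\cos(\pi/(N+1))\lambda\|a\|$, so the factor $2$ is genuinely needed.
\item $q=1$ is not excluded by your technique. Pairing $|u_s-u_s'|$ with $w_{1,s}$ in $\ell^2$ works equally well.
\end{itemize}
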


\begin{proof}
See Appendix~\ref{subsec:proof-boundedness-smoothness}.
\end{proof}

\begin{remark}[Separating smoothness scales]
Lemma~\ref{thm:boundedness-smoothness} makes explicit the calibration role of
the vector \(a\). The first-order Lipschitz bound has a baseline scale
\(\gamma+4L\), so its dominant contribution can be fixed through \(L\). In
contrast, for \(q\ge2\), the \(q\)-th order Lipschitz bound depends on \(a\)
through \(\|a\|^{q+1}\) (with the normalization by \(\beta\)). Thus tuning
\(\|a\|\) separates the higher-order smoothness constraints from the
first-order scale.
\end{remark}

We next record the block-layer consequence needed in the lower-bound argument:
the last scalar chain variable remains hidden until the suffix of the last block
has been reached.

\begin{lemma}[Tail hiding for zero-respecting sequences]\label{thm:first-order-zero-chain}
Let \(F=f_{M,N,\gamma,L,a,\lambda,\kappa,\tau,r}\) be the block-chain function in Definition~\ref{def:function-class}, and let \(j_\star\in[M]\) satisfy
\[
\operatorname{supp}(a)\subseteq S:=\{j_\star,\ldots,M\}.
\]
For \(\ell\ge0\), define the block-layer coordinate set
\[
I_\ell
:=
\left\{
(s,j)\in[N]\times[M]:
(s-1)j_\star+\min\{j,j_\star\}\le \ell
\right\}.
\]
Let \(\{z^{(t)}\}_{t\ge0}\), \(z^{(t)}=(z_1^{(t)},\ldots,z_N^{(t)})\), be first-order zero-respecting with respect to \(F\), in the standard coordinate-support sense recalled in Section~\ref{sec:preliminaries}, with \(z^{(0)}=0\). Then, identifying supports with subsets of \([N]\times[M]\),
\[
\operatorname{supp}(z^{(t)})\subseteq I_t
\qquad\text{for all }t\ge0.
\]
Consequently, if \(t<Nj_\star\), then
\[
u_N^{(t)}:=a^\top z_N^{(t)}=0.
\]
\end{lemma}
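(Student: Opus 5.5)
The plan is to prove a one-step \emph{gradient-support propagation} property and then induct on \(t\). The key claim is:
\[
    \operatorname{supp}(z)\subseteq I_\ell
    \;\Longrightarrow\;
    \operatorname{supp}(\nabla F(z))\subseteq I_{\ell+1}.
\]
Granting it, the induction runs as follows. The base case is \(\operatorname{supp}(z^{(0)})=\emptyset\subseteq I_0\); note \(I_0=\emptyset\), since every \((s,j)\) has \((s-1)j_\star+\min\{j,j_\star\}\ge1\). For the inductive step, assume \(\operatorname{supp}(z^{(s')})\subseteq I_{s'}\) for all \(s'<t\). The claim gives \(\operatorname{supp}(\nabla F(z^{(s')}))\subseteq I_{s'+1}\subseteq I_t\), because the sets \(I_\ell\) are nested in \(\ell\). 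Zero-respecting then gives \(\operatorname{supp}(z^{(t)})\subseteq\bigcup_{s'<t}I_{s'+1}\subseteq I_t\).

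To verify the claim, compute \(\nabla_{z_s}F\) term by term from \eqref{eq:block-chain-expanded}:
\[
    \nabla_{z_s}F(z)
    =
    H_{\gamma,L}^{(M)}z_s
    -\lambda u_{s-1}e_1
    -\lambda x_{s+1}a
    +\bigl(\kappa u_s+\tau\Upsilon_r'(u_s)\bigr)a ,
\]
where the \(x_{s+1}\) term is absent for \(s=N\), and \(u_0=1\) is constant. Each piece is then checked separately.

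\emph{Tridiagonal part.} \(H^{(M)}\) is tridiagonal, so its \(j\)-th output lies in the support only if some index in \(\{j-1,j,j+1\}\) is in \(\operatorname{supp}(z_s)\). Within block \(s\), membership in \(I_\ell\) says \(\min\{j,j_\star\}\le \ell-(s-1)j_\star\). This is a prefix condition on \(j\) that becomes ``all \(j\)'' once the threshold reaches \(j_\star\). Hence shifting by one index raises the required level by at most one.

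\emph{Entry term.} The entry term \(-\lambda u_{s-1}e_1\) matters only when \(u_{s-1}\neq0\). That requires block \(s-1\) to meet \(\operatorname{supp}(a)\subseteq S\), i.e.\ some \((s-1,j)\) with \(j\ge j_\star\) lies in \(I_\ell\). This means \((s-2)j_\star+j_\star\le\ell\), so \((s,1)\) has level \((s-1)j_\star+1\le\ell+1\). For \(s=1\), the constant \(u_0=1\) gives \((1,1)\in I_1\).

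\emph{Suffix terms.} Both suffix terms point along \(a\), so they touch only \(S\). The term \((\kappa u_s+\tau\Upsilon_r'(u_s))a\) is the main obstacle, because \(\Upsilon_r'(0)=0\) is needed; this is available from Proposition~\ref{prop:primitive-bounds}. With it, the term is nonzero only if \(u_s\neq0\), which forces some \((s,j)\), \(j\ge j_\star\), into \(I_\ell\). That is precisely the condition placing all of block \(s\) in \(I_\ell\). Similarly, \(-\lambda x_{s+1}a\) is nonzero only if \((s+1,1)\in I_\ell\), i.e.\ \(sj_\star+1\le\ell\), which again covers all of block \(s\). So these suffix contributions even stay inside \(I_\ell\). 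I expect the careful bookkeeping of the level function \((s-1)j_\star+\min\{j,j_\star\}\) at the block boundary and at the cap \(j_\star\) to be the only delicate point.

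Finally, if \(t<Nj_\star\), then every \((N,j)\) with \(j\ge j_\star\) has level \((N-1)j_\star+j_\star=Nj_\star>t\). So \(z_N^{(t)}\) vanishes on \(S\), and since \(\operatorname{supp}(a)\subseteq S\), we get \(a^\top z_N^{(t)}=0\).
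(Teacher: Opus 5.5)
Your proposal is correct and follows essentially the same route as the paper. You prove a one-step implication $\operatorname{supp}(z)\subseteq I_\ell\Rightarrow\operatorname{supp}(\nabla F(z))\subseteq I_{\ell+1}$, checked term by term: the tridiagonal shift, the entry term via $(s-1)j_\star\le\ell$, and both suffix terms staying inside $I_\ell$ using $\Upsilon_r'(0)=0$. This is followed by strong induction with the zero-respecting property and the level count $Nj_\star>t$ on the last block's suffix; your explicit remarks that $I_0=\emptyset$ and that the $I_\ell$ are nested are minor clarifications the paper leaves implicit.
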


\begin{proof}
See Appendix~\ref{subsec:proof-first-order-zero-chain}.
\end{proof}

\begin{lemma}[Gradient lower bound for normalized hard instances]\label{thm:gradient-lower-bound}
Fix \(M,N\in\mathbb N\), \(\gamma,L>0\), \(r\ge1\), and a nonzero vector
\(a\in\mathbb R^M\). Assume \(\alpha>0\). For any
\(z=(z_1,\ldots,z_N)\in(\mathbb R^M)^N\) satisfying
\[
    a^\top z_N=0,
\]
the normalized hard-instance family from \eqref{eq:normalized-hard-instance}
satisfies
\[
    \left\|\nabla \bar f_{M,N,\gamma,L,a,r}(z)\right\|
    \gtrsim
    \frac{\sqrt{\gamma}}{
        \sqrt{\beta}
        +
        \frac{\beta\sqrt{\rho}}{\alpha}
    }.
\]
\end{lemma}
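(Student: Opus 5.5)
The plan is to certify a large gradient through a single well-chosen test direction. I will build, depending on \(z\), a vector \(v\in(\mathbb R^M)^N\) that is a blockwise combination of the two Green-kernel responses \(H^{-1}a\) and \(H^{-1}e\), where \(H:=H^{(M)}_{\gamma,L}\). The coefficients will be chosen so that \(\langle\nabla\bar f(z),v\rangle\) equals (a constant multiple of) \(\|\nabla\phi_1(\tilde u)\|_2\) for the scalar chain of Proposition~\ref{prop:scalar-transition-lower-bound} with \(\mu=1\). Here \(\tilde u\) is the vector of readouts \(u_s=a^\top z_s\), padded so that its last two entries vanish. Then \(\|\nabla\bar f(z)\|\ge\langle\nabla\bar f(z),v\rangle/\|v\|\), and it remains to control \(\|v\|\).

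\textbf{Step 1: compute the blockwise gradient.} Write \(\omega_s:=\kappa u_s+\tau\Upsilon_r'(u_s)\), and set \(x_{N+1}:=0\) by convention. Then
\[
    g_s:=\nabla_{z_s}\bar f(z)=Hz_s-\lambda u_{s-1}e+\bigl(\omega_s-\lambda x_{s+1}\bigr)a .
\]

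\textbf{Step 2: project with the Green kernels.} Using the normalization \eqref{eq:scalar-reduction-normalization}, namely \(\lambda\alpha=1\), \(\tau\beta=1\) and \(\kappa=\lambda^2\rho\), two identities follow:
\[
    a^\top H^{-1}g_s=(1+\beta\kappa)u_s-u_{s-1}+\Upsilon_r'(u_s)-\beta\lambda x_{s+1},
\]
\[
    e^\top H^{-1}g_s=x_s-\lambda\rho\,u_{s-1}+\alpha\bigl(\omega_s-\lambda x_{s+1}\bigr).
\]
The first identity reproduces the scalar-chain residual in phase \(s\), up to the unwanted entry-coordinate term \(-\beta\lambda x_{s+1}\). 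The second identity expresses each \(x_{s+1}\) through the readouts and a gradient component, plus a further term in \(x_{s+2}\).

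\textbf{Step 3: eliminate the entry coordinates.} Take \(v_s=c_sH^{-1}a+d_sH^{-1}e\). Fix the \(c_s\) as the scalar-chain weights, i.e.\ the normalized entries of \(\nabla\phi_1(\tilde u)\), in the appropriate scaling. Then solve the triangular recursion for the \(d_s\) so that every coefficient of \(x_{s}\) in \(\langle\nabla\bar f,v\rangle=\sum_s\bigl(c_s\,a^\top H^{-1}g_s+d_s\,e^\top H^{-1}g_s\bigr)\) vanishes. The recursion closes because \(x_{N+1}=0\) in the last block.

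The expected leftover is twofold. First, the \(\kappa u_s\) and \(\lambda\rho u_{s-1}\) terms combine with the \(\beta\kappa u_s\) term; the choice \(\kappa=\lambda^2\rho\) is made precisely so that these extra curvature contributions cancel. Second, there are \(\Upsilon_r'\) terms weighted by \(\alpha d_s\), which I must absorb or bound. The upshot I aim for is
\[
    \langle\nabla\bar f(z),v\rangle\gtrsim\|\nabla\phi_1(\tilde u)\|_2\ge c_0 .
\]
This step is the main obstacle. It is not clear a priori that the recursion for the \(d_s\) stays bounded; the natural bound is \(|d_s|\lesssim(\beta/\alpha)\|c\|\) uniformly. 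If the exact cancellation turns out messy, the fallback is to treat the \(x\)-terms perturbatively. In that version I note that if \(\|\nabla\bar f\|\) is small, then each \(x_{s+1}\) is close to its Schur-complement value \(\lambda\rho u_s-\alpha(\omega_{s+1}-\lambda x_{s+2})\). I then substitute this back to obtain an approximate scalar chain and apply Proposition~\ref{prop:scalar-transition-lower-bound}, using \(u_N=0\) together with the padded entry \(u_{N+1}=0\).

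\textbf{Step 4: bound the test vector.} Since \(H\succeq\gamma I\) by Proposition~\ref{prop:primitive-bounds}, we have \(\|H^{-1}a\|^2=a^\top H^{-2}a\le\beta/\gamma\) and \(\|H^{-1}e\|^2\le\rho/\gamma\). Hence
\[
    \|v\|\lesssim\gamma^{-1/2}\bigl(\sqrt\beta\,\|c\|+\sqrt\rho\,\|d\|\bigr)\lesssim\gamma^{-1/2}\Bigl(\sqrt\beta+\frac{\beta\sqrt\rho}{\alpha}\Bigr)\|c\| .
\]
Dividing the lower bound on \(\langle\nabla\bar f(z),v\rangle\) from Step~3 by this estimate yields the stated bound \(\sqrt\gamma\big/\bigl(\sqrt\beta+\beta\sqrt\rho/\alpha\bigr)\).
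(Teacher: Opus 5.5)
The gap is in Step~3, and it is exactly the boundedness issue you flagged. Write \(r_s:=2u_s-u_{s-1}-u_{s+1}+\Upsilon_r'(u_s)\) with \(u_0=1\), \(u_{N+1}=0\).

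Requiring the coefficient of every \(x_j\) in \(\sum_s\bigl(c_s\,a^\top H^{-1}g_s+d_s\,e^\top H^{-1}g_s\bigr)\) to vanish forces \(d_1=0\) and \(d_j=d_{j-1}+\frac{\beta}{\alpha}c_{j-1}\). Equivalently, \(d_j=\frac{\beta}{\alpha}\mu_{j-1}\) with \(\mu_j:=\sum_{k\le j}c_k\). Once \(c\) is fixed, no freedom remains.

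Carrying out the cancellation then gives \(\langle\nabla\bar f(z),v\rangle=\sum_s\mu_s r_s\). The \(\Upsilon_r'\) terms are not leftovers: they combine into \((c_s+\frac{\alpha}{\beta}d_s)\Upsilon_r'(u_s)=\mu_s\Upsilon_r'(u_s)\). The \(\kappa\) and \(\lambda\rho\) terms telescope to a multiple of \(\mu_Nu_N-\mu_0u_0=0\).

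So with your choice \(c\propto r\), the \(d_j\) are partial sums of the residual. The estimate \(\|d\|\lesssim\frac{\beta}{\alpha}\|c\|\) used in Step~4 is then false, because the partial-sum operator has \(\ell^2\) operator norm of order \(N\). This already fails at \(z=0\), where \(r=-e_1\) and \(g_j=0\) for \(j\ge2\):
\begin{itemize}
\item you get \(d_j=-\beta/\alpha\) for all \(j\ge2\), so \(\|d\|=\frac{\beta}{\alpha}\sqrt{N-1}\,\|c\|\);
\item those blocks contribute nothing to the inner product;
\item in the calibrated regime, where \(\|H^{-1}e\|^2\asymp\rho/\gamma\) and \(\beta\rho/\alpha^2\asymp1\), your certificate is weaker than the claim by a factor \(\sqrt N\).
\end{itemize}
This is fatal, since \(N\asymp\Delta\gamma/\epsilon^2\). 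The perturbative fallback, as written, does not escape this either. The value you substitute for \(x_{s+1}\) still contains \(x_{s+2}\), and unrolling it brings back the same partial sums.

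The missing idea is to take \(\mu=r\). That is, put the residual weights on the entry direction and only their increments on the readout direction: \(c_s=r_s-r_{s-1}\) (with \(r_0:=0\)) and \(d_s=\frac{\beta}{\alpha}r_{s-1}\). Equivalently, \(v_s=\beta\bigl(r_sw_{\mathrm{out}}+\lambda r_{s-1}w_{\mathrm{in}}\bigr)\) with \(w_{\mathrm{out}}=H^{-1}a/\beta\) and the \(a\)-orthogonal entry probe \(w_{\mathrm{in}}=H^{-1}\bigl(e-\frac{\alpha}{\beta}a\bigr)\).

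Because \(a^\top w_{\mathrm{in}}=0\), projecting \(g_{s+1}\) onto \(w_{\mathrm{in}}\) kills the \(\omega_{s+1}\) and \(x_{s+2}\) terms. The elimination of \(x_{s+1}\) is therefore local to blocks \(s,s+1\). This is the paper's identity \(w_{\mathrm{out}}^\top g_s+\lambda w_{\mathrm{in}}^\top g_{s+1}=\tau r_s\) (Lemma~\ref{lem:gradient-projection}).

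With this choice, \(\langle\nabla\bar f(z),v\rangle=\|r\|^2\), \(\|c\|\le2\|r\|\), and \(\|d\|\le\frac{\beta}{\alpha}\|r\|\). Your Step~4 estimate then gives the claimed bound, since \(\|r\|_2=\|\nabla\phi_1(\tilde u)\|_2\ge c_0\).

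The paper uses the same local identity but finishes differently. It bounds \(\tau|r_s|\le A(\|g_s\|+\|g_{s+1}\|)\) with \(A=\|w_{\mathrm{out}}\|+\lambda\|w_{\mathrm{in}}\|\) and sums squares. Once the pairing is local, that argument and a single test vector are interchangeable.
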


\begin{proof}
See Appendix~\ref{subsec:proof-gradient-lower-bound}.
\end{proof}

Lemmas~\ref{thm:boundedness-smoothness},
\ref{thm:first-order-zero-chain}, and
\ref{thm:gradient-lower-bound} are the uncalibrated interface of the
construction: the block geometry enters only through
\(\rho,\alpha,\beta\), \(\|a\|\), and the suffix layer \(j_\star\). It remains
to choose these quantities so that the smoothness budgets, initial gap, reveal
depth, and gradient certificate all have the desired scales.

\section{Main Results}\label{sec:theorems}
We now carry out the calibration left open by the block-chain construction.
Throughout this section, let \(p\ge2\), let
\(L_1,\ldots,L_p,\epsilon,\Delta>0\), and let
\(\ell_1,\ldots,\ell_p\) be the constants from
Proposition~\ref{prop:primitive-bounds}.

The following scale summarizes the active smoothness budget. For a fixed
potential parameter \(r\), \(\Gamma_p(r)\) is the largest mass scale compatible
with all derivative Lipschitz constraints, up to the numerical constants already
absorbed into the construction. We then optimize over \(r\).
Define
\[
\begin{aligned}
\Gamma_p(r)
&:=
\min\left\{
    L_1,\,
    \frac{L_1}{\ell_1r^2},\,
    \left(\frac{L_2\epsilon}{\ell_2r}\right)^{1/2},\,
    \min_{3\le q\le p}
    \left(
        \frac{L_q\epsilon^{q-1}r^{q-3}}{\ell_q}
    \right)^{1/q}
\right\},
\qquad r\ge1,\\
\Gamma_p
&:=
\sup_{r\ge1}\Gamma_p(r),
\end{aligned}
\]
where the last minimum is omitted when \(p=2\). Let \(r_\star\ge1\) be such
that \(\Gamma_p(r_\star)\asymp\Gamma_p\).

The parameter \(\Gamma_p\) is the only implicit quantity in the final rate. The
next proposition unpacks its \(\epsilon\)-dependence in the high-accuracy
regime.
\begin{proposition}[Asymptotic form of \(\Gamma_p\)]
\label{prop:gamma-p-upper-bound}
For all sufficiently small \(\epsilon\), the following simplifications of
\(\Gamma_p\) hold. If \(p\in\{2,3\}\), then
\[
\Gamma_p
=
\left(\frac{L_p\epsilon^{p-1}}{\ell_p}\right)^{1/p}.
\]
If \(p\ge4\), then
\[
\Gamma_p
=
\epsilon^{2/3}
\sup_{R>0}
\min\left\{
    \frac{L_1}{\ell_1R^2},\,
    \left(\frac{L_2}{\ell_2R}\right)^{1/2},\,
    \min_{3\le q\le p}
    \left(\frac{L_qR^{q-3}}{\ell_q}\right)^{1/q}
\right\}.
\]
\end{proposition}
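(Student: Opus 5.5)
The plan is to study $\Gamma_p(r)$ by tracking how each term of the minimum depends on $r$, and then to take $\epsilon\to0$ with the data $L_1,\ldots,L_p,\ell_1,\ldots,\ell_p$ fixed. Write $A:=L_1$, $B(r):=L_1/(\ell_1r^2)$, $C(r):=(L_2\epsilon/(\ell_2r))^{1/2}$ and $D_q(r):=(L_q\epsilon^{q-1}r^{q-3}/\ell_q)^{1/q}$.
\begin{itemize}[leftmargin=*,itemsep=2pt,topsep=2pt]
\item $A$ does not depend on $r$.
\item $B$ and $C$ are nonincreasing in $r$.
\item $D_3=(L_3\epsilon^2/\ell_3)^{1/3}$ does not depend on $r$.
\item $D_q$ is strictly increasing in $r$ for every $q\ge4$.
\end{itemize}

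\emph{Cases $p\in\{2,3\}$.} Every term of $\Gamma_p(r)$ is nonincreasing in $r$, so the supremum over $r\ge1$ is attained at $r=1$.
\begin{itemize}[leftmargin=*,itemsep=2pt,topsep=2pt]
\item For $p=2$ this gives $\Gamma_2=\min\{L_1,L_1/\ell_1,(L_2\epsilon/\ell_2)^{1/2}\}$. The last entry tends to $0$ as $\epsilon\to0$, while the first two are fixed, so for small $\epsilon$ the minimum equals $(L_2\epsilon/\ell_2)^{1/2}$.
\item For $p=3$, $\Gamma_3$ is at most $D_3$, because $D_3$ is a constant entry of every $\Gamma_3(r)$. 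At $r=1$ the entries scale as $\epsilon^{0},\epsilon^{0},\epsilon^{1/2},\epsilon^{2/3}$. For small $\epsilon$ the $\epsilon^{2/3}$ entry $D_3$ is therefore the smallest, so $\Gamma_3(1)=D_3$. Hence $\Gamma_3=(L_3\epsilon^2/\ell_3)^{1/3}$.
\end{itemize}

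\emph{Case $p\ge4$.} Substitute $r=R\epsilon^{-1/3}$ with $R\ge\epsilon^{1/3}$. Direct computation gives
\begin{itemize}[leftmargin=*,itemsep=2pt,topsep=2pt]
\item $B=\epsilon^{2/3}L_1/(\ell_1R^2)$;
\item $C=\epsilon^{2/3}(L_2/(\ell_2R))^{1/2}$;
\item $D_q=\epsilon^{2/3}(L_qR^{q-3}/\ell_q)^{1/q}$, because the exponent of $\epsilon$ inside the $q$th root is $q-1-(q-3)/3=2q/3$.
\end{itemize}
Let $G(R)$ denote the minimum appearing in the statement. Then
\[
\Gamma_p=\epsilon^{2/3}\sup_{R\ge\epsilon^{1/3}}\min\{L_1\epsilon^{-2/3},G(R)\}.
\]
It remains to remove the constraint $R\ge\epsilon^{1/3}$ and the cap $L_1\epsilon^{-2/3}$ for small $\epsilon$.

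\emph{Properties of $G$.} $G$ is continuous and positive on $(0,\infty)$. As $R\to0$, $G(R)\to0$ because $D_4$ is an entry of the minimum and $D_4\propto R^{1/4}$. As $R\to\infty$, $G(R)\to0$ because $B\propto R^{-2}$. Hence $K:=\sup_{R>0}G(R)$ is finite and positive. Moreover, there is a compact interval $[R_1,R_2]\subset(0,\infty)$ outside of which $G<K/2$, so the supremum of $G$ over $(0,\infty)$ equals its supremum over $[R_1,R_2]$.

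\emph{Conclusion for $p\ge4$.} Take $\epsilon$ small enough that $\epsilon^{1/3}<R_1$ and $L_1\epsilon^{-2/3}>K$. Then the constraint $R\ge\epsilon^{1/3}$ does not exclude any point of $[R_1,R_2]$, and the cap never binds. Therefore $\Gamma_p=\epsilon^{2/3}K$, which is the claimed formula.

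The only delicate step is this last compactness argument. It relies on the fact that for $p\ge4$ some entry of $G$ vanishes as $R\to0$ and some other entry vanishes as $R\to\infty$. This two-sided decay is exactly what puts the optimal $r$ at an interior point of order $\epsilon^{-1/3}$, rather than at the boundary $r=1$ as in the cases $p\in\{2,3\}$.
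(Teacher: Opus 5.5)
Your proposal is correct and follows the paper's proof essentially step for step. For $p\in\{2,3\}$, monotonicity in $r$ reduces the supremum to $r=1$, after which you compare orders in $\epsilon$. For $p\ge4$, you substitute $r=R\epsilon^{-1/3}$ and then, for small $\epsilon$, remove both the constraint $R\ge\epsilon^{1/3}$ and the cap $L_1$. The only cosmetic difference is that you localize the supremum to a compact interval $[R_1,R_2]$ rather than invoking a maximizer $R_\star$ as the paper does; both rest on the same two-sided decay of $G$.
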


\begin{proof}
See Appendix~\ref{subsec:proof-gamma-p-upper-bound}.
\end{proof}

We can now state the deterministic first-order lower bound.
\begin{theorem}[Main deterministic first-order lower bound]
\label{thm:first-order-lower-bound}
Let \(\mathcal A_{\rm det}\) denote the class of deterministic first-order
methods. Assume that
\(\epsilon\le c_\epsilon\min_{1\le q\le p}
\Delta^{q/(q+1)}(L_q/\ell_q)^{1/(q+1)}\), where \(c_\epsilon>0\) is a
sufficiently small absolute constant. Then
\[
T_\epsilon\bigl(\mathcal A_{\rm det},
\mathcal F_{1:p}(\Delta,L_1,\ldots,L_p)\bigr)
\gtrsim
\Delta L_1^{1/2}\Gamma_p^{1/2}\epsilon^{-2}.
\]
In particular, for sufficiently small \(\epsilon\), this implies
\[
T_\epsilon\bigl(\mathcal A_{\rm det},
\mathcal F_{1:p}(\Delta,L_1,\ldots,L_p)\bigr)
\gtrsim
\begin{cases}
\Delta L_1^{1/2}(L_2/\ell_2)^{1/4}\epsilon^{-7/4},
    & p=2,\\
\Delta L_1^{1/2}(L_3/\ell_3)^{1/6}\epsilon^{-5/3},
    & p=3,\\
\Delta L_1^{1/2}
\min_{1\le q\le p}(L_q/\ell_q)^{1/(2q)}\epsilon^{-5/3},
    & p\ge4.
\end{cases}
\]
\end{theorem}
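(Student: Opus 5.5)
The plan is to (i) reduce from arbitrary deterministic methods to zero-respecting ones, (ii) take a rescaled normalized block chain $\tilde f(z):=\Lambda\sigma^2\bar f_{M,N,\gamma,L,a,r_\star}(z/\sigma)$ and calibrate its parameters, and (iii) combine Lemmas~\ref{thm:boundedness-smoothness}, \ref{thm:first-order-zero-chain} and \ref{thm:gradient-lower-bound} to get $T_\epsilon\ge Nj_\star$.

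\emph{Step (i): reduction.} For deterministic methods I would use the adaptive-rotation argument of \citet{carmon2020lowerfirstorderI}. Given $\mathsf A$ and a function, there is an orthogonal $U$ in sufficiently high dimension such that the iterates of $\mathsf A$ on $\tilde f(U^\top\cdot)$, pulled back by $U$, satisfy the support inclusion of a zero-respecting sequence for $\tilde f$ during the first $T$ steps. The argument only needs the inclusion $\operatorname{supp}(z^{(t)})\subseteq I_t$, which is proved inductively in Lemma~\ref{thm:first-order-zero-chain}; it does not need the classical zero-chain property. I would re-run that induction inside the rotated frame. The rotation leaves every function-class constraint invariant, so it suffices to lower bound zero-respecting methods on $\tilde f$ itself.

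\emph{Step (ii): the block and the readout.} I would take $L\asymp L_1$, a small mass $\gamma$, and block length $M\asymp\sqrt{L/\gamma}$, which is the decay length of the Green kernel of $H^{(M)}_{\gamma,L}$. The readout is a multiple of the last basis vector, $a=\|a\|e_M$, so that $j_\star=M$. Using the explicit $\cosh$-type Green function of the massive path, I expect
\[
\rho\asymp(\gamma L)^{-1/2},\qquad \alpha\asymp\|a\|(\gamma L)^{-1/2},\qquad \beta\asymp\|a\|^2(\gamma L)^{-1/2}.
\]
The point of $M\asymp\sqrt{L/\gamma}$ is that the entry-to-end attenuation is then only a constant factor. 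With these values the first-order bound in Lemma~\ref{thm:boundedness-smoothness} becomes $\lesssim L+\sqrt{\gamma L}+\ell_1r^2\sqrt{\gamma L}$, and the $q$-th order bound becomes $\lesssim\ell_qr^{3-q}\|a\|^{q-1}\sqrt{\gamma L}$. So $\|a\|$ is a free knob acting only on the orders $q\ge2$. Lemma~\ref{thm:gradient-lower-bound} gives the certificate $\gtrsim\gamma^{3/4}L^{1/4}/\|a\|$, and the gap bound is $\lesssim N\sqrt{\gamma L}/\|a\|^2$.

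\emph{Step (iii): calibration.} I would apply the scaling $(\Lambda,\sigma)$ and solve for $\Lambda,\sigma,\|a\|,\gamma$ so that four conditions hold:
\begin{itemize}
\item the certificate equals $2\epsilon$;
\item the first-order Lipschitz constant is at most $L_1$;
\item each $q$-th order Lipschitz constant is at most $L_q$;
\item $N\asymp\Delta\cdot(\text{gap per phase})^{-1}$.
\end{itemize}
The design goal is that the smoothness conditions collapse, after eliminating $\Lambda,\sigma,\|a\|$, to exactly the constraints defining $\Gamma_p(r_\star)$, with the effective mass scale $\asymp\Gamma_p$. The gap per phase should then come out of order $\epsilon^2/\sqrt{\Gamma_pL_1}$. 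Multiplying by the reveal depth $j_\star=M\asymp\sqrt{L_1/\Gamma_p}$ gives
\[
T_\epsilon\ge Nj_\star\gtrsim\Delta L_1^{1/2}\Gamma_p^{1/2}\epsilon^{-2}.
\]
The hypothesis on $\epsilon$ ensures $N\ge1$, and that the constant term $\rho/(2\alpha^2)$ in the gap bound is at most $\Delta/2$. The conclusion then follows because Lemma~\ref{thm:first-order-zero-chain} gives $a^\top z_N^{(t)}=0$ for $t<Nj_\star$, and Lemma~\ref{thm:gradient-lower-bound} turns this into $\|\nabla\tilde f\|>\epsilon$. The explicit cases $p=2,3,\ge4$ follow by substituting Proposition~\ref{prop:gamma-p-upper-bound}.

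\emph{Main obstacle.} The hard part is step (iii): checking that the exponents really close. I need that, after eliminating $\Lambda,\sigma,\|a\|$, the product of the per-phase gap and the reveal depth carries the factor $\Gamma_p^{1/2}$ rather than cancelling to the unaccelerated $\Delta L_1\epsilon^{-2}$. I would first try to verify this by a careful power count in the case $p=2$, where the target is $\epsilon^{-7/4}$. If the count fails, I would let $M$ depart from $\sqrt{L/\gamma}$ or split the roles of $L$ and $L_1$, and re-balance.
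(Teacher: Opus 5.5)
\textbf{There is a genuine gap, and it is your choice of readout $a=\|a\|e_M$.} Run your own power count. Pinning the certificate $\gamma^{3/4}L^{1/4}/\|a\|$ at order $\epsilon$ forces $\|a\|\asymp\gamma^{3/4}L_1^{1/4}/\epsilon$.
\begin{itemize}
\item The gap per phase is then $\sqrt{\gamma L_1}/\|a\|^2\asymp\epsilon^2/\gamma$, not $\epsilon^2/\sqrt{\Gamma_pL_1}$. As you suspected, the latter would give the unaccelerated $\Delta L_1\epsilon^{-2}$, contradicting the known upper bounds. So $T\gtrsim Nj_\star\asymp\Delta\sqrt{\gamma L_1}\,\epsilon^{-2}$, the same form as in the paper.
\item The $q$-th order constant becomes $\ell_qr^{3-q}\|a\|^{q-1}\sqrt{\gamma L_1}\asymp\ell_qr^{3-q}\gamma^{(3q-1)/4}L_1^{(q+1)/4}\epsilon^{1-q}$. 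This exceeds the quantity $\ell_qr^{3-q}\gamma^q\epsilon^{1-q}$ that defines $\Gamma_p(r)$ by a factor $(L_1/\gamma)^{(q+1)/4}\asymp M^{(q+1)/2}$.
\item This is not slack in Lemma~\ref{thm:boundedness-smoothness}. Testing $D^{q+1}\bar f$ on a direction parallel to $a$ inside one block shows that $\operatorname{Lip}(D^q\bar f)$ really is of order $\tau\|a\|^{q+1}$.
\end{itemize}
For $p=2$ the largest admissible mass is then $\gamma\asymp(L_2\epsilon/\ell_2)^{4/5}L_1^{-3/5}$. The best you can get is $T\gtrsim\Delta L_1^{1/5}(L_2/\ell_2)^{2/5}\epsilon^{-8/5}$, which is weaker even than the old $\epsilon^{-12/7}$.

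\textbf{Your fallbacks do not repair this.}
\begin{itemize}
\item The rescaling is a pure reparametrization: $\Lambda\sigma^2\bar f_{M,N,\gamma,L,a,r}(z/\sigma)=\bar f_{M,N,\Lambda\gamma,\Lambda L,a/\sigma,r}(z)$, since the normalized $\lambda,\tau,\kappa$ transform consistently. It adds no freedom.
\item A shorter block $M\lesssim\sqrt{L/\gamma}$ gives $\beta/\|a\|^2\asymp1/(M\gamma)$ and, after optimizing, the same $\epsilon^{-8/5}$.
\item A longer block makes $\alpha$ exponentially small.
\item Taking $L<L_1$ only lowers $T$.
\end{itemize}

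\textbf{The missing idea is where the readout points.} In both designs $\alpha^2\asymp\rho\beta$, so the certificate of Lemma~\ref{thm:gradient-lower-bound} is $\asymp\sqrt{\gamma/\beta}$. Fixing it at $\epsilon$ fixes $\beta\asymp\gamma/\epsilon^2$. The $q$-th order cost $\ell_qr^{3-q}\|a\|^{q+1}/\beta$ is then smallest when $\beta/\|a\|^2=a^\top H^{-1}a/\|a\|^2$ is as large as possible. Its maximum is $1/\gamma$, attained along the constant bottom eigenvector of $H$. A single endpoint achieves only $(H^{-1})_{MM}\asymp(\gamma L_1)^{-1/2}$, a factor $M$ short.

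The paper instead spreads the readout uniformly over the suffix $S=\{\lfloor M/2\rfloor,\ldots,M\}$, with entries $\gamma^{5/4}L_1^{-1/4}\epsilon^{-1}$ on $S$. The Green-kernel estimate then gives $\beta\asymp\|a\|^2/\gamma$ with $\|a\|\asymp\gamma/\epsilon$. Hence $\operatorname{Lip}(D^qf)\asymp\ell_qr^{3-q}\gamma^q\epsilon^{1-q}$, which is exactly the constraint set in $\Gamma_p(r)$. The reveal depth drops only from $M$ to $j_\star=\lfloor M/2\rfloor\asymp M$. With $\gamma=c_{\rm hp}\Gamma_p$, no further rescaling is needed.

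Your reduction in step (i), and your use of the tail-hiding and certificate lemmas to conclude $T_\epsilon\ge Nj_\star$, are fine and match the paper.
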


We record three consequences of the theorem.
\begin{remark}[Structure of the lower bound]
The lower bound can be written as
\[
    \Delta L_1^{1/2}\Gamma_p^{1/2}\epsilon^{-2}
    =
    \Delta L_1\epsilon^{-2}
    \left(\frac{\Gamma_p}{L_1}\right)^{1/2}.
\]
Here \(\Delta L_1\epsilon^{-2}\) is the classical first-order complexity scale
under Lipschitz gradients alone. The remaining factor
\((\Gamma_p/L_1)^{1/2}\) is the acceleration factor imposed by the active
higher-order smoothness constraints. In this sense, \(\Gamma_p\) measures how
much of the baseline \(L_1\)-scale remains available after enforcing the
higher-order smoothness budgets; when the active constraint makes
\(\Gamma_p\) depend on \(\epsilon\), this factor converts the baseline
\(\epsilon^{-2}\) rate into the accelerated exponents.
\end{remark}

\begin{remark}[Matching upper bounds]
The special cases in Theorem~\ref{thm:first-order-lower-bound} match the known
upper bounds not only in the exponent of \(\epsilon\), but also in their
dependence on the initial gap and smoothness constants. In the
Hessian-Lipschitz regime, the \(\epsilon^{-7/4}\) rate appears in several
accelerated or curvature-exploiting algorithms
\citep{carmon2018acceleratedmethods,carmon2017convexuntilproven,agarwal2017finding,jin2018acceleratedsaddle,zhang2021escapesaddle,li2023restartedagd,marumo2024parameterfreeagd}.
In the purely deterministic first-order stationarity setting, these bounds have
the scale \(O(\Delta L_1^{1/2}L_2^{1/4}\epsilon^{-7/4})\), up to logarithmic
factors in some variants; the restarted methods of \citet{li2023restartedagd}
remove this logarithmic loss. For third-order
smoothness, \citet{carmon2017convexuntilproven} obtain the corresponding
first-order upper bound
\(O(\Delta L_1^{1/2}L_3^{1/6}\epsilon^{-5/3}
\log(L_1\Delta/\epsilon^2))\) gradient evaluations. Thus the \(p=2\) and
\(p=3\) lower bounds match the polynomial dependence on \(\epsilon\),
\(\Delta\), and the active smoothness constants in the corresponding upper
bounds.
\end{remark}

\begin{remark}[Saturation of higher-order acceleration]
Together with the third-order first-order upper bound, the last line of
Theorem~\ref{thm:first-order-lower-bound} shows that the
\(\epsilon\)-dependence saturates once third derivatives are Lipschitz. Indeed,
for every \(p\ge4\), adding Lipschitz bounds on derivatives of order \(4\) and
higher does not improve the dimension-free first-order exponent beyond
\(\epsilon^{-5/3}\). Higher-order smoothness may still matter through the
dependence on the smoothness constants: the active term in \(\Gamma_p\), or in
the simplified bound above, can change when the higher-order Lipschitz budgets
are favorable. This gives lower-bound support for the
corresponding observation in the ``convex until proven guilty'' framework of
\citet{carmon2017convexuntilproven}.
\end{remark}

It remains to specify the calibrated instance used in the proof of
Theorem~\ref{thm:first-order-lower-bound}. Lemmas~\ref{thm:boundedness-smoothness},
\ref{thm:first-order-zero-chain}, and \ref{thm:gradient-lower-bound} reduce this
to choosing the block geometry:
the block length creates reveal depth, the number of phases uses the available
gap, and the suffix vector \(a\) sets the low-rank smoothness and gradient
certificate scales.

\begin{condition}[Calibrated parameters]\label{cond:hyperparameter-choice}
Use the scale \(\Gamma_p\) and
\(r_\star\) defined above.
Set
\[
\gamma:=c_{\rm hp}\Gamma_p .
\]
With this value of \(\gamma\), choose the free parameters of
\(\bar f_{M,N,\gamma,L,a,r}\) in the following order:
\[
\begin{aligned}
M&:=\left\lceil C_M\sqrt{\frac{L_1}{\gamma}}\right\rceil,\qquad
N:=\left\lfloor c_N\frac{\Delta\gamma}{\epsilon^2}\right\rfloor,\qquad
L:=L_1,\\
a&:=\frac{\gamma^{5/4}}{L_1^{1/4}\epsilon}
\bigl(
\underbrace{0,\ldots,0}_{\lfloor M/2\rfloor-1},
1,\ldots,1
    \bigr)^\top\in\mathbb R^M,\qquad
r:=r_\star.
\end{aligned}
\]
Here \(c_{\rm hp}>0\) is a sufficiently small absolute constant and
\(C_M>0,c_N>0\) are sufficiently large and sufficiently small absolute
constants, respectively.
Let \(H:=H_{\gamma,L}^{(M)}\). For this choice of \(a\), the positivity
condition required by the scalar-reduction normalization holds; define
\(\lambda,\tau,\kappa\) by \eqref{eq:scalar-reduction-normalization}. We assume
the nontrivial gap regime in which \(N\ge1\). Set \(d:=MN\) and, under the block
identification \(\mathbb R^d\cong(\mathbb R^M)^N\), define the calibrated
hard instance
\[
    f_{\Delta,L_{1:p},\epsilon}
    :=
    \bar f_{M,N,\gamma,L,a,r}.
\]
\end{condition}

The following lemma verifies that this choice realizes the desired scales and
therefore produces a hard instance in the target smoothness class.
\begin{lemma}[Zero-respecting lower bound for the calibrated instance]
\label{thm:calibrated-zero-respecting}
Let the parameters be chosen as in Condition~\ref{cond:hyperparameter-choice}.
Then the quantities from the scalar-reduction normalization satisfy
\[
    \rho\asymp(\gamma L_1)^{-1/2},\qquad
    \alpha\asymp\frac{\gamma^{1/4}}{L_1^{1/4}\epsilon},\qquad
    \beta\asymp\frac{\gamma}{\epsilon^2},\qquad
    \|a\|\asymp\frac{\gamma}{\epsilon}.
\]
If
\(\epsilon\le c_\epsilon\min_{1\le q\le p}
\Delta^{q/(q+1)}(L_q/\ell_q)^{1/(q+1)}\) for a sufficiently small absolute
constant \(c_\epsilon>0\), then the calibrated instance
\(f_{\Delta,L_{1:p},\epsilon}\) belongs to
\(\mathcal F_{1:p}(\Delta,L_1,\ldots,L_p)\). Moreover, any first-order
zero-respecting method \(\mathsf A\) applied to
\(f_{\Delta,L_{1:p},\epsilon}\), starting
from the origin, satisfies
\[
    T_\epsilon\bigl(\mathsf A,f_{\Delta,L_{1:p},\epsilon}\bigr)
    \gtrsim
    \Delta L_1^{1/2}\Gamma_p^{1/2}\epsilon^{-2}
\]
\end{lemma}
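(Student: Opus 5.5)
The proof has three parts: estimate the Green-kernel quantities, plug them into the three uncalibrated lemmas, then count iterations.

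\textbf{Green-kernel asymptotics.} Write \(H=\gamma I+L_1\mathsf P_M\). With \(M\ge C_M\sqrt{L_1/\gamma}\), the block is longer than the decay length \(\xi:=\sqrt{L_1/\gamma}\) of the resolvent. The resolvent of a massive path operator is explicit: \(H^{-1}e_1\) is a combination of \(\cosh\), equivalently a geometric profile \(\theta^{j}\) with \(1-\theta\asymp\xi^{-1}\), and its amplitude is of order \((\gamma L_1)^{-1/2}\).
\begin{itemize}
\item This gives \(\rho=(H^{-1})_{11}\asymp(\gamma L_1)^{-1/2}\).
\item Write \(a=c_a\mathbf 1_S\), where \(c_a:=\gamma^{5/4}L_1^{-1/4}\epsilon^{-1}\) and \(S\) is the second half of the block. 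Since \(M\) is a large multiple of \(\xi\), the profile is still of size \(\asymp\rho\,\theta^{M/2}\) on the first \(\asymp\xi\) coordinates of \(S\). Hence \(\alpha=c_a\sum_{j\in S}(H^{-1}e_1)_j>0\). To obtain \(\alpha\asymp c_a\), I would also need \(M\asymp\xi\) with a controlled constant, so that \(\theta^{M/2}\) is a constant. The ceiling gives \(M\le 2C_M\xi\) once \(\xi\ge1\), which holds because \(\gamma\le c_{\rm hp}L_1\). Then \(\alpha\asymp c_a\,\rho\,\xi\,e^{-\Theta(C_M)}\asymp c_a/\gamma\cdot\gamma^{...}\); concretely \(\rho\xi\asymp\gamma^{-1}\) up to constants. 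I need to recheck this bookkeeping against the stated \(\alpha\asymp\gamma^{1/4}/(L_1^{1/4}\epsilon)=c_a/\gamma\), which is consistent.
\item For \(\beta=c_a^2\mathbf 1_S^\top H^{-1}\mathbf 1_S\), note that \(\mathbf 1_S\) is nearly in the low-frequency regime. The Neumann end at \(M\) makes \(\mathsf P\mathbf 1_S\) supported only at the jump. So \(\mathbf 1_S^\top H^{-1}\mathbf 1_S\asymp |S|/\gamma\asymp M/\gamma\), which gives \(\beta\asymp c_a^2 M/\gamma\).
\item Finally \(\|a\|=c_a\sqrt{|S|}\).
\end{itemize}
The exponents above are off as written; I would recompute them directly against the stated targets. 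Since \(M\) is the decay-length order, each of \(\rho,\alpha,\beta\) is determined up to \(e^{\pm O(C_M)}\) factors, which are absolute constants. The delicate point is to keep those constants from mixing with \(c_{\rm hp}\) and \(c_N\). The order of choices should be: \(C_M\) fixed first, then \(c_{\rm hp}\), then \(c_N\), then \(c_\epsilon\).

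\textbf{Plugging into Lemma~\ref{thm:boundedness-smoothness}.} With the scalings fixed, the first-order bound has four terms:
\begin{itemize}
\item \(\gamma+4L_1\lesssim L_1\).
\item \(\|a\|/\alpha\asymp (L_1\gamma^3)^{1/4}\le L_1\), since \(\gamma\lesssim L_1\).
\item \(\rho\|a\|^2/\alpha^2\asymp(\gamma L_1)^{-1/2}\gamma^2 L_1^{1/2}/\gamma^{1/2}=\gamma\).
\item \(\ell_1 r^2\|a\|^2/\beta\asymp\ell_1r^2\gamma\le c_{\rm hp}L_1\), by the term \(L_1/(\ell_1r^2)\) in \(\Gamma_p(r)\).
\end{itemize}
To fit the budget exactly I would rescale \(L:=L_1/8\), say; the statement fixes \(L=L_1\), so I would absorb the factor into constants. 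For \(q\ge2\) the bound is \(\ell_qr^{3-q}(\gamma/\epsilon)^{q+1}\epsilon^2/\gamma=\ell_qr^{3-q}\gamma^q\epsilon^{1-q}\). This is \(\le L_q\) exactly when \(\gamma\le(L_q\epsilon^{q-1}r^{q-3}/\ell_q)^{1/q}\), which is the definition of \(\Gamma_p(r_\star)\); for \(q=2\) it reads \((L_2\epsilon/(\ell_2 r))^{1/2}\). The \(c_{\rm hp}^q\) factor gives the slack. For the gap, \(10N/\beta\le 10c_N\Delta\) and \(\rho/(2\alpha^2)\asymp\epsilon^2/\gamma\). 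The latter is \(\le\Delta/2\) because \(N\ge1\) forces \(\Delta\gamma\gtrsim\epsilon^2\); this is where the \(c_\epsilon\) hypothesis, via \(\gamma\gtrsim\) the relevant scale, ensures the nontrivial regime.

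\textbf{Iteration count.} By Lemma~\ref{thm:first-order-zero-chain} with \(j_\star=\lfloor M/2\rfloor\), every iterate with \(t<Nj_\star\) has \(a^\top z_N=0\). Lemma~\ref{thm:gradient-lower-bound} then gives a gradient lower bound whose denominator is \(\sqrt\beta+\beta\sqrt\rho/\alpha\). Here \(\sqrt\beta\asymp\sqrt\gamma/\epsilon\), and
\[
\frac{\beta\sqrt\rho}{\alpha}\asymp \frac{\gamma}{\epsilon^2}(\gamma L_1)^{-1/4}\frac{L_1^{1/4}\epsilon}{\gamma^{1/4}}=\frac{\sqrt\gamma}{\epsilon}.
\]
So the bound is \(\gtrsim\epsilon\), and \(>\epsilon\) after adjusting constants, for example by replacing \(\epsilon\) with \(2\epsilon\) in the calibration. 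Therefore \(T_\epsilon\ge Nj_\star\asymp(\Delta\gamma/\epsilon^2)\sqrt{L_1/\gamma}=\Delta L_1^{1/2}\Gamma_p^{1/2}\epsilon^{-2}\).

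The main obstacle is the first step: tight two-sided resolvent estimates for \(\beta\) and \(\alpha\) with constants uniform in \(C_M\). I would handle it with the explicit formula \((H^{-1})_{ij}\propto\cosh\) products and a comparison of \(\mathbf 1_S\) against the constant vector, using \(\mathbf 1^\top H^{-1}\mathbf 1=M/\gamma\) exactly.
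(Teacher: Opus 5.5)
Your proposal is correct and follows the paper's proof. Its skeleton matches: two-sided resolvent estimates for $H=\gamma I+L_1\mathsf P_M$ (the paper's Lemma~\ref{lem:green-kernel-hyperparameter-choice}, obtained from the explicit tridiagonal inverse, is your geometric/$\cosh$ profile), substitution into Lemmas~\ref{thm:boundedness-smoothness}, \ref{thm:first-order-zero-chain}, and~\ref{thm:gradient-lower-bound}, and $T_\epsilon\ge Nj_\star\asymp NM$. Two of your local variants are sound and slightly cleaner than the paper's: you bound the gap term via $N\ge1\Rightarrow\epsilon^2/\gamma\le c_N\Delta$ (the paper instead uses $\Gamma_p\ge\Gamma_p(1)\gtrsim\epsilon^2/\Delta$ from the tolerance hypothesis), and you control $\beta$ by $H\succeq\gamma I$ together with $\mathbf 1_S^\top H^{-1}\mathbf 1_S\ge|S|^2/(\mathbf 1^\top H\mathbf 1)=|S|^2/(\gamma M)$, which fixes $\beta$ within a factor $2$. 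You are also right that $\gamma+4L_1>L_1$ forces a constant rescaling of $L$, which the paper covers only by ``absolute-constant adjustments.'' Fix the slip $\alpha\asymp c_a$ (it is $c_a/\gamma$, as you later recover). Also note that the lower constant for $\alpha$ necessarily degrades like $e^{-\Theta(C_M)}$; this is harmless because $C_M$ is fixed before $c_{\rm hp},c_N,c_\epsilon$, so uniformity in $C_M$ is neither available nor needed.
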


\begin{proof}
See Appendix~\ref{subsec:proof-calibrated-zero-respecting}.
\end{proof}

\begin{proof}[Proof of Theorem~\ref{thm:first-order-lower-bound}]
Choose the absolute constant \(c_\epsilon\) in the theorem no larger than the
corresponding constant in Lemma~\ref{thm:calibrated-zero-respecting}.
Lemma~\ref{thm:calibrated-zero-respecting} gives the stated lower bound for
zero-respecting methods on an instance in
\(\mathcal F_{1:p}(\Delta,L_1,\ldots,L_p)\). The class
\(\mathcal F_{1:p}(\Delta,L_1,\ldots,L_p)\) is closed under orthogonal changes
of variables, so Proposition~1 of \citet{carmon2017lowerfirstorder} transfers
the zero-respecting lower bound to arbitrary deterministic first-order methods.
Taking the infimum over \(\mathsf A\in\mathcal A_{\rm det}\) gives the minimax
claim. The explicit display follows by substituting
Proposition~\ref{prop:gamma-p-upper-bound}; in the \(p\ge4\) regime we use its
supremum at \(R=1\), which gives
\(\Gamma_p\ge\epsilon^{2/3}\min_{1\le q\le p}(L_q/\ell_q)^{1/q}\).
\end{proof}

\section{Conclusion}\label{sec:conclusion}

We established a dimension-free deterministic first-order lower bound for
finding approximate stationary points under higher-order smoothness. In the
Hessian-Lipschitz case the bound gives the matching
\(\Omega(\epsilon^{-7/4})\) dependence, and under Lipschitz third derivatives it
gives the matching \(\Omega(\epsilon^{-5/3})\) dependence. Thus the accelerated
rates achieved by first-order methods under higher-order smoothness are optimal,
up to the standard logarithmic factors present in some upper bounds. The main technical device is the block-chain hard instance. It separates the
scalar transition obstruction from the blockwise oracle-discovery process: the
scalar state is carried by a suffix readout inside each block, while the path
quadratic forces zero-respecting methods to expose the block one coordinate
layer at a time. 

\section*{Acknowledgements}

DZ first began thinking about this problem in 2019, but did not make
substantial progress at that time. Motivated by the recent advances in the
mathematical reasoning capabilities of AI models, DZ revisited the problem in
May 2026 with AI assistance. DZ supplied the relevant prior literature and
designed prompts asking ChatGPT 5.5 Pro to search for a first-order lower-bound
construction. Through several rounds of interaction, ChatGPT 5.5 Pro identified
the block-chain construction as the right direction and began validating
its correctness. DZ then took primary responsibility for checking the proofs,
simplifying several proof paths, improving the presentation, and formulating
extension directions, again with assistance from ChatGPT 5.5 Pro.

\appendix

\begin{center}
{\LARGE\bfseries Appendix}
\end{center}
\vspace{1em}

\section{Proofs for the Block-Chain Construction}\label{app:proofs-section4}

\subsection{Proof of Lemma~\ref{thm:boundedness-smoothness}}
\label{subsec:proof-boundedness-smoothness}

\begin{proof}[Proof of Lemma~\ref{thm:boundedness-smoothness}]
Let \(\Upsilon:=\Upsilon_r\) and
\(F:=\bar f_{M,N,\gamma,L,a,r}\), and let
\(e,\rho,\alpha,\beta,\lambda,\tau,\kappa\) be the quantities in
\eqref{eq:scalar-reduction-geometry} and
\eqref{eq:scalar-reduction-normalization} for the present
\((M,\gamma,L,a)\), and set \(H:=H_{\gamma,L}^{(M)}\). In particular,
\(\kappa=\rho/\alpha^2\). Write
\(z=(z_1,\ldots,z_N)\in(\mathbb R^M)^N\), \(u_s:=a^\top z_s\), and
\(u_0:=1\).
We use the uniform properties
\(m:=\inf_x\Upsilon(x)=\Upsilon(1)=0\), \(\Upsilon(0)\le 10\), and, for
\(q=1,\ldots,p\), \(\|\Upsilon^{(q+1)}\|_\infty\le \ell_q r^{3-q}\).

\paragraph{Smoothness.}
We first prove the \(q=1\) smoothness bound. Decompose \(F=Q+R\), where
\[
    Q(z)
    :=
    \sum_{s=1}^N
    \left[
        \frac12 z_s^\top H z_s
        -
        \lambda u_{s-1}e^\top z_s
        +
        \frac{\kappa}{2}u_s^2
    \right],
\]
and \(R(z):=\tau\sum_{s=1}^N\Upsilon(u_s)\).
For any block vector \(v=(v_1,\ldots,v_N)\), the quadratic part satisfies
\[
\begin{aligned}
    D^2Q(z)[v,v]
    &=
    \sum_{s=1}^N v_s^\top H v_s
    +
    \kappa\sum_{s=1}^N(a^\top v_s)^2
    -
    2\lambda\sum_{s=2}^N(a^\top v_{s-1})(e^\top v_s).
\end{aligned}
\]
Hence, using \(\|e\|=1\),
\[
\begin{aligned}
    |D^2Q(z)[v,v]|
    &\le
    \|H\|\sum_{s=1}^N\|v_s\|^2
    +
    \kappa\|a\|^2\sum_{s=1}^N\|v_s\|^2
    +
    2\lambda\|a\|\sum_{s=2}^N\|v_{s-1}\|\|v_s\|
    \\
    &\le
    \left(
        \|H\|+\kappa\|a\|^2+2\lambda\|a\|
    \right)\|v\|^2.
\end{aligned}
\]
Therefore
\[
    \|D^2Q(z)\|_{\rm op}
    \le
    \gamma+4L
    +
    \frac{\rho}{\alpha^2}\|a\|^2
    +
    \frac{2\|a\|}{\alpha}.
\]

For the nonlinear part,
\[
    D^2R(z)[v,v]
    =
    \tau\sum_{s=1}^N
    \Upsilon''(u_s)(a^\top v_s)^2.
\]
Thus
\[
\begin{aligned}
    |D^2R(z)[v,v]|
    &\le
    \tau\|\Upsilon''\|_\infty
    \sum_{s=1}^N |a^\top v_s|^2
    \\
    &\le
    \frac{\ell_1 r^2}{\beta}\|a\|^2\|v\|^2.
\end{aligned}
\]
Consequently,
\[
    \|D^2F(z)\|_{\rm op}
    \le
    \gamma+4L
    +
    \frac{2\|a\|}{\alpha}
    +
    \left(
        \frac{\rho}{\alpha^2}
        +
        \frac{\ell_1r^2}{\beta}
    \right)\|a\|^2
    \qquad
    \text{for every }z,
\]
and hence
\[
    \operatorname{Lip}(\nabla F)
    \le
    \gamma+4L
    +
    \frac{2\|a\|}{\alpha}
    +
    \left(
        \frac{\rho}{\alpha^2}
        +
        \frac{\ell_1r^2}{\beta}
    \right)\|a\|^2 .
\]

We now prove the higher-order smoothness bounds. Fix \(q\in\{2,\ldots,p\}\).
All terms in \(Q\) are quadratic, so \(D^{q+1}Q\equiv0\). Thus only \(R\) contributes.
For arbitrary block vectors
\[
    v^{(0)},v^{(1)},\ldots,v^{(q)}\in(\mathbb R^M)^N,
    \qquad
    v^{(\ell)}=(v^{(\ell)}_1,\ldots,v^{(\ell)}_N),
\]
we have
\[
\begin{aligned}
    D^{q+1}R(z)
    [v^{(0)},v^{(1)},\ldots,v^{(q)}]
    &=
    \tau
    \sum_{s=1}^N
    \Upsilon^{(q+1)}(u_s)
    \prod_{\ell=0}^{q}
    \bigl(a^\top v_s^{(\ell)}\bigr).
\end{aligned}
\]
Therefore
\[
\begin{aligned}
    \left|
    D^{q+1}R(z)
    [v^{(0)},v^{(1)},\ldots,v^{(q)}]
    \right|
    &\le
    \frac{\ell_q r^{3-q}}{\beta}
    \sum_{s=1}^N
    \prod_{\ell=0}^{q}
    \left|a^\top v_s^{(\ell)}\right|.
\end{aligned}
\]
We use the elementary inequality that for nonnegative sequences
\(b^{(0)},\ldots,b^{(q)}\),
\[
    \sum_s\prod_{\ell=0}^q b_s^{(\ell)}
    \le
    \prod_{\ell=0}^q
    \left(\sum_s (b_s^{(\ell)})^2\right)^{1/2}.
\]
Applying this with \(b_s^{(\ell)}:=|a^\top v_s^{(\ell)}|\) gives
\[
\begin{aligned}
    \sum_{s=1}^N
    \prod_{\ell=0}^{q}
    |a^\top v_s^{(\ell)}|
    &\le
    \prod_{\ell=0}^{q}
    \left(
        \sum_{s=1}^N |a^\top v_s^{(\ell)}|^2
    \right)^{1/2}
    \\
    &\le
    \|a\|^{q+1}
    \prod_{\ell=0}^{q}
    \|v^{(\ell)}\|.
\end{aligned}
\]
Hence
\[
    \|D^{q+1}F(z)\|_{\rm op}
    =
    \|D^{q+1}R(z)\|_{\rm op}
    \le
    \frac{\ell_q r^{3-q}\|a\|^{q+1}}{\beta}.
\]
By the fundamental theorem of calculus along line segments,
\[
    \|D^qF(x)-D^qF(y)\|_{\rm op}
    \le
    \sup_{0\le t\le1}
    \|D^{q+1}F(y+t(x-y))\|_{\rm op}
    \|x-y\|,
\]
so
\[
    \operatorname{Lip}(D^qF)
    \le
    \frac{\ell_q r^{3-q}\|a\|^{q+1}}{\beta},
    \qquad
    q=2,\ldots,p.
\]

\paragraph{Gap bound.}
For each phase \(s\), write \(v:=u_{s-1}\). Since \(H\succ0\),
\[
\begin{aligned}
\frac12 z_s^\top H z_s-\lambda v e^\top z_s
&=
\frac12
\left\|z_s-\lambda vH^{-1}e\right\|_H^2
-\frac{\lambda^2v^2}{2}e^\top H^{-1}e\\
&\ge
-\frac{\kappa}{2}v^2,
\end{aligned}
\]
where \(\|w\|_H^2:=w^\top Hw\), \(\kappa=\lambda^2\rho\), and
\(\rho=e^\top H^{-1}e\). Hence
\[
\begin{aligned}
&\frac12 z_s^\top H z_s-\lambda u_{s-1}e^\top z_s
+\frac{\kappa}{2}u_s^2+\tau\Upsilon(u_s)\\
&\qquad\ge
\frac{\kappa}{2}(u_s^2-u_{s-1}^2)+\tau\Upsilon(u_s).
\end{aligned}
\]
Summing over \(s=1,\ldots,N\), using \(u_0=1\), \(u_N^2\ge0\), and
\(\Upsilon\ge0\), gives
\(F(z)\ge \frac{\kappa}{2}(u_N^2-u_0^2)\ge-\frac{\kappa}{2}\).
At the origin, \(u_s=0\) for every \(s\ge1\), and all quadratic, linear, and
stabilization terms vanish. Since \(\Upsilon(0)\le10\),
\(F(0)\le10N\tau\).
Therefore
\[
F(0)-\inf_zF(z)
\le
10N\tau+\frac{\kappa}{2}
=
\frac{10N}{\beta}+\frac{\rho}{2\alpha^2}.
\]
\end{proof}

\subsection{Proof of Lemma~\ref{thm:first-order-zero-chain}}\label{subsec:proof-first-order-zero-chain}

\begin{proof}[Proof of Lemma~\ref{thm:first-order-zero-chain}]
For each phase \(s\), all coordinates in \(\{s\}\times S\) have layer
index \(s j_\star\), since for \(j\in S\),
\((s-1)j_\star+\min\{j,j_\star\}=s j_\star\). We prove by induction that
\(\operatorname{supp}(z^{(t)})\subseteq I_t\) for every \(t\ge0\),
where supports are understood in block-coordinate notation.

The key step is the following one-step support implication. Suppose \(\operatorname{supp}(z)\subseteq I_t\). Write \(H=H_{\gamma,L}^{(M)}\), \(u_s=a^\top z_s\), \(x_s=e_1^\top z_s\), and use the convention \(x_{N+1}=0\). The \(s\)-th gradient block is
\[
g_s
=
Hz_s
-
\lambda u_{s-1}e_1
+
(\kappa u_s+\tau\Upsilon_r'(u_s))a
-
\lambda x_{s+1}a.
\]
We check that each term is supported in \(I_{t+1}\).

Since \(H\) is tridiagonal, \(Hz_s\) can only move support inside phase \(s\) by one neighboring coordinate. In terms of \(I_t\), this can increase \(\min\{j,j_\star\}\) by at most one, so \(Hz_s\) is supported in \(I_{t+1}\).

For the entry term \(-\lambda u_{s-1}e_1\), if \(s=1\), then the coordinate \((1,1)\) belongs to \(I_{t+1}\). If \(s\ge2\) and \(u_{s-1}\ne0\), then \(z_{s-1}\) has some support in the suffix \(S\), so \((s-1)j_\star\le t\). Hence the coordinate \((s,1)\), whose layer index is \((s-1)j_\star+1\), belongs to \(I_{t+1}\).

For the term \((\kappa u_s+\tau\Upsilon_r'(u_s))a\), if \(z_s\) has no support in \(S\), then \(u_s=0\), and Proposition~\ref{prop:primitive-bounds} gives \(\Upsilon_r'(0)=0\), so the coefficient vanishes. Otherwise \(\{s\}\times S\) is already contained in \(I_t\), and the term is supported in \(I_t\subseteq I_{t+1}\).

Finally, consider \(-\lambda x_{s+1}a\). If \(x_{s+1}=0\), the term vanishes. If \(x_{s+1}\ne0\), then coordinate \((s+1,1)\) belongs to the support of \(z\), so \(sj_\star+1\le t\). Thus \(\{s\}\times S\), whose coordinates all have layer index \(sj_\star\), is contained in \(I_t\). Hence this term is also supported in \(I_t\subseteq I_{t+1}\). Therefore \(\operatorname{supp}(\nabla F(z))\subseteq I_{t+1}\) whenever \(\operatorname{supp}(z)\subseteq I_t\).

Now use the zero-respecting property. Since \(z^{(0)}=0\), the induction starts with \(\operatorname{supp}(z^{(0)})\subseteq I_0\). If \(\operatorname{supp}(z^{(t')})\subseteq I_{t'}\) for all \(t'<t\), then the one-step implication gives
\(\operatorname{supp}(\nabla F(z^{(t')}))\subseteq I_{t'+1}\subseteq I_t\)
for \(t'<t\).
By zero-respectingness,
\[
\operatorname{supp}(z^{(t)})
\subseteq
\bigcup_{t'<t}\operatorname{supp}(\nabla F(z^{(t')}))
\subseteq I_t.
\]
This proves the induction claim.

If \(t<Nj_\star\), then no coordinate \((N,j)\) with \(j\in S\) lies in \(I_t\). Since \(\operatorname{supp}(a)\subseteq S\), we conclude \(u_N^{(t)}=a^\top z_N^{(t)}=0\).
\end{proof}

\subsection{Proof of Lemma~\ref{thm:gradient-lower-bound}}\label{subsec:proof-gradient-lower-bound}

\begin{lemma}[Gradient projection identity]\label{lem:gradient-projection}
For \(M,N\in\mathbb N\), \(\gamma,L,r>0\), and any nonzero
vector \(a\in\mathbb R^M\), let
\(e,\rho,\alpha,\beta,\lambda,\tau,\kappa\) be the quantities in
\eqref{eq:scalar-reduction-geometry} and
\eqref{eq:scalar-reduction-normalization} for the present
\((M,\gamma,L,a)\), and set \(H:=H_{\gamma,L}^{(M)}\). Let \(g_s\) denote the \(s\)-th block of
\(\nabla\bar f_{M,N,\gamma,L,a,r}(z)\), namely
\[
g_s
=
Hz_s
-
\lambda u_{s-1}e
+
(\kappa u_s+\tau\Upsilon_r'(u_s))a
-
\lambda(e^\top z_{s+1})a,
\]
where \(u_s=a^\top z_s\). With
\[
w_{\mathrm{out}}=\frac{H^{-1}a}{\beta},\qquad
w_{\mathrm{in}}=H^{-1}\left(e-\frac{\alpha}{\beta}a\right),
\]
one has, for \(s=1,\ldots,N-1\),
\[
w_{\mathrm{out}}^\top g_s+\lambda w_{\mathrm{in}}^\top g_{s+1}
=
\tau\left[
2u_s-u_{s-1}-u_{s+1}+\Upsilon_r'(u_s)
\right].
\]
\end{lemma}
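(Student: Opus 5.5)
The identity is a direct linear-algebra computation. I will pair each block of the gradient with the two test vectors, using only the symmetry of \(H\), the definitions \(\rho=e^\top H^{-1}e\), \(\alpha=a^\top H^{-1}e\), \(\beta=a^\top H^{-1}a\), and the normalization \(\lambda\alpha=1\), \(\tau=1/\beta\), \(\kappa=\lambda^2\rho\) from \eqref{eq:scalar-reduction-normalization}. Write \(x_{s+1}:=e^\top z_{s+1}\). The design idea is that \(w_{\mathrm{out}}\) reads the suffix readout (\(w_{\mathrm{out}}^\top Hz=u/\beta\)). The vector \(w_{\mathrm{in}}\) is chosen to read the entry coordinate while being orthogonal to \(a\), since \(w_{\mathrm{in}}^\top a=\alpha-\frac{\alpha}{\beta}\beta=0\).

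First I will compute the three elementary pairings of \(w_{\mathrm{out}}\): \(w_{\mathrm{out}}^\top Hz_s=u_s/\beta\), \(w_{\mathrm{out}}^\top e=\alpha/\beta\), and \(w_{\mathrm{out}}^\top a=1\). Substituting these into \(g_s\) and using \(\lambda\alpha=1\), \(\tau=1/\beta\) gives
\[
w_{\mathrm{out}}^\top g_s=\tau(u_s-u_{s-1})+\kappa u_s+\tau\Upsilon_r'(u_s)-\lambda x_{s+1}.
\]

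Next I will compute the pairings of \(w_{\mathrm{in}}\):
\[
w_{\mathrm{in}}^\top Hz_{s+1}=x_{s+1}-\tfrac{\alpha}{\beta}u_{s+1},\qquad
w_{\mathrm{in}}^\top e=\rho-\tfrac{\alpha^2}{\beta},\qquad
w_{\mathrm{in}}^\top a=0.
\]
The last pairing kills both the nonlinear term \((\kappa u_{s+1}+\tau\Upsilon_r'(u_{s+1}))a\) and the term \(-\lambda x_{s+2}a\) in \(g_{s+1}\). This is what makes the identity valid for every \(s\le N-1\) without any boundary convention. Multiplying by \(\lambda\) and using \(\lambda\alpha/\beta=\tau\), \(\lambda^2\rho=\kappa\), and \(\lambda^2\alpha^2/\beta=\tau\) gives
\[
\lambda w_{\mathrm{in}}^\top g_{s+1}=\lambda x_{s+1}-\tau u_{s+1}-\kappa u_s+\tau u_s.
\]

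Finally I will add the two expressions. The \(\lambda x_{s+1}\) terms cancel, and so do the \(\kappa u_s\) terms. What remains is \(\tau[2u_s-u_{s-1}-u_{s+1}+\Upsilon_r'(u_s)]\), which is the claimed identity.

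There is no real obstacle here; the only point needing care is the bookkeeping of the normalization constants. In particular, the cancellation of \(\kappa u_s\) relies on exactly \(\kappa=\lambda^2\rho\), and the \(\tau u_s\) contribution relies on \(\lambda^2\alpha^2=1\).
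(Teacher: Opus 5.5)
Your proposal is correct and matches the paper's proof: the same pairings $w_{\mathrm{out}}^\top Hz_s=u_s/\beta$, $w_{\mathrm{out}}^\top e=\alpha/\beta$, $w_{\mathrm{out}}^\top a=1$ and $w_{\mathrm{in}}^\top e=\rho-\alpha^2/\beta$, $w_{\mathrm{in}}^\top a=0$ are used, and the $\lambda x_{s+1}$ and $\kappa u_s$ terms cancel in the same way. The only cosmetic difference is that you substitute $\lambda=1/\alpha$, $\tau=1/\beta$, $\kappa=\lambda^2\rho$ term by term, whereas the paper keeps the parameters symbolic until the final coefficient check.
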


\begin{proof}
Write $x_s=e^\top z_s$ within this proof. We first expand the two projections without substituting the parameter identities $\lambda=1/\alpha$, $\tau=1/\beta$, and $\kappa=\lambda^2\rho$. Since
\[
w_{\mathrm{out}}=\frac{H^{-1}a}{\beta},
\qquad
w_{\mathrm{out}}^\top H z_s=\frac{u_s}{\beta},\qquad
w_{\mathrm{out}}^\top e=\frac{\alpha}{\beta},\qquad
w_{\mathrm{out}}^\top a=1,
\]
we have
\[
w_{\mathrm{out}}^\top g_s
=
\frac{u_s}{\beta}
-
\lambda\frac{\alpha}{\beta}u_{s-1}
+
\kappa u_s
+
\tau\Upsilon_r'(u_s)
-
\lambda x_{s+1}.
\]
For the second projection, use only \(a^\top w_{\mathrm{in}}=0\) to remove the terms parallel to \(a\):
\[
\begin{aligned}
w_{\mathrm{in}}^\top g_{s+1}
&=
w_{\mathrm{in}}^\top H z_{s+1}
-
\lambda u_s w_{\mathrm{in}}^\top e
+
(\kappa u_{s+1}+\tau\Upsilon_r'(u_{s+1}))a^\top w_{\mathrm{in}}
-
\lambda x_{s+2}a^\top w_{\mathrm{in}}
\\
&=
x_{s+1}
-
\frac{\alpha}{\beta}u_{s+1}
-
\lambda(w_{\mathrm{in}}^\top e)u_s .
\end{aligned}
\]
Thus \(\lambda w_{\mathrm{in}}^\top g_{s+1}
=\lambda x_{s+1}
-\lambda\frac{\alpha}{\beta}u_{s+1}
-\lambda^2(w_{\mathrm{in}}^\top e)u_s\).
Adding the two expansions cancels the $x_{s+1}$ terms and gives
\[
w_{\mathrm{out}}^\top g_s+\lambda w_{\mathrm{in}}^\top g_{s+1}
=
\left(
\frac1\beta+\kappa-\lambda^2 w_{\mathrm{in}}^\top e
\right)u_s
-
\lambda\frac{\alpha}{\beta}u_{s-1}
-
\lambda\frac{\alpha}{\beta}u_{s+1}
+
\tau\Upsilon_r'(u_s).
\]
Finally, \(w_{\mathrm{in}}^\top e=\rho-\frac{\alpha^2}{\beta}\).
Using the parameter choices $\lambda=1/\alpha$, $\tau=1/\beta$, and $\kappa=\lambda^2\rho$, the coefficient of $u_s$ becomes
\[
\frac1\beta+\lambda^2\rho-\lambda^2\left(\rho-\frac{\alpha^2}{\beta}\right)
=
\frac2\beta
=
2\tau,
\]
while \(\lambda\frac{\alpha}{\beta}=\frac1\beta=\tau\).
Therefore
\[
w_{\mathrm{out}}^\top g_s+\lambda w_{\mathrm{in}}^\top g_{s+1}
=
\tau\left[
2u_s-u_{s-1}-u_{s+1}+\Upsilon_r'(u_s)
\right].
\]
\end{proof}

We now prove Lemma~\ref{thm:gradient-lower-bound}.

\begin{proof}[Proof of Lemma~\ref{thm:gradient-lower-bound}]
Write \(u_s:=a^\top z_s\), \(u_0:=1\), and \(u_{N+1}:=0\), and let
\(g_s\) denote the \(s\)-th block of
\(\nabla \bar f_{M,N,\gamma,L,a,r}(z)\). Let
\(e,\rho,\alpha,\beta,\lambda,\tau,\kappa\) be the quantities in
\eqref{eq:scalar-reduction-geometry} and
\eqref{eq:scalar-reduction-normalization} for the present
\((M,\gamma,L,a)\), set \(H:=H_{\gamma,L}^{(M)}\), and define
\[
    w_{\mathrm{out}}:=\frac{H^{-1}a}{\beta},
    \qquad
    w_{\mathrm{in}}:=H^{-1}\left(e-\frac{\alpha}{\beta}a\right).
\]
Set \(r_s:=2u_s-u_{s-1}-u_{s+1}+\Upsilon_r'(u_s)\), \(s=1,\ldots,N\).
Since \(u_N=u_{N+1}=0\), Proposition~\ref{prop:scalar-transition-lower-bound}
with \(\mu=1\) gives \(\|\nabla\phi_1(u)\|_2\ge c_0\). The last gradient
coordinate of \(\phi_1\) is \(u_{N+1}-u_N=0\), so \(\|r\|_2\ge c_0\).
By Lemma~\ref{lem:gradient-projection}, for \(s=1,\ldots,N-1\),
\[
    \tau r_s
    =
    w_{\mathrm{out}}^\top g_s
    +
    \lambda w_{\mathrm{in}}^\top g_{s+1}.
\]
For the last phase, \(u_N=0\), \(u_{N+1}=0\), and
\(\Upsilon_r'(0)=0\), so the same projection expansion gives
\(\tau r_N=w_{\mathrm{out}}^\top g_N\). Let
\(A:=\|w_{\mathrm{out}}\|+\lambda\|w_{\mathrm{in}}\|\).
Then
\[
    \tau |r_s|
    \le
    A(\|g_s\|+\|g_{s+1}\|),
    \qquad s=1,\ldots,N-1,
\]
and \(\tau |r_N|\le A\|g_N\|\). Squaring and summing yields
\[
    \tau^2\|r\|_2^2
    \le
    C A^2
    \left\|\nabla \bar f_{M,N,\gamma,L,a,r}(z)\right\|^2
\]
for an absolute constant \(C\). Since \(\|r\|_2\ge c_0\),
\[
    \left\|\nabla \bar f_{M,N,\gamma,L,a,r}(z)\right\|
    \gtrsim
    \frac{\tau}{A}.
\]
Finally, since \(H\succeq \gamma I\),
\[
    \|H^{-1}a\|^2
    =
    a^\top H^{-2}a
    \le
    \frac{1}{\gamma}a^\top H^{-1}a
    =
    \frac{\beta}{\gamma},
\]
and
\[
    \|w_{\mathrm{out}}\|
    =
    \frac{\|H^{-1}a\|}{\beta}
    \le
    \frac{1}{\sqrt{\gamma\beta}},
\]
and
\[
    \|w_{\mathrm{in}}\|^2
    \le
    \frac{1}{\gamma}
    \left(e-\frac{\alpha}{\beta}a\right)^\top
    H^{-1}
    \left(e-\frac{\alpha}{\beta}a\right)
    \le
    \frac{\rho}{\gamma}.
\]
The last inequality follows by expanding the quadratic form and using the
definitions of \(\alpha,\beta,\rho\).
Thus
\(\lambda\|w_{\mathrm{in}}\|\le \frac{1}{\alpha\sqrt{\gamma}}\sqrt{\rho}\).
Therefore
\[
    \frac{\tau}{A}
    \gtrsim
    \frac{\sqrt{\gamma}}{
        \sqrt{\beta}
        +
        \frac{\beta\sqrt{\rho}}{\alpha}
    }.
\]
This proves the claim.
\end{proof}

\section{Proofs of the Main Results}\label{app:proofs-section5}

\subsection{Proof of Proposition~\ref{prop:gamma-p-upper-bound}}
\label{subsec:proof-gamma-p-upper-bound}

\begin{proof}[Proof of Proposition~\ref{prop:gamma-p-upper-bound}]
First consider \(p\in\{2,3\}\). In this case all terms in \(\Gamma_p(r)\) are
nonincreasing for \(r\ge1\), so the supremum is attained at \(r=1\). Thus
\[
\Gamma_p
=
\min\left\{
    L_1,\,
    \frac{L_1}{\ell_1},\,
    \min_{2\le q\le p}
    \left(\frac{L_q\epsilon^{q-1}}{\ell_q}\right)^{1/q}
\right\}.
\]
For sufficiently small \(\epsilon\), the \(q=p\) term is the minimum, proving
the claim for \(p\in\{2,3\}\).

Now suppose \(p\ge4\). For \(R>0\), define
\[
\mathfrak G_p(R)
:=
\min\left\{
    \frac{L_1}{\ell_1R^2},\,
    \left(\frac{L_2}{\ell_2R}\right)^{1/2},\,
    \min_{3\le q\le p}
    \left(\frac{L_qR^{q-3}}{\ell_q}\right)^{1/q}
\right\}.
\]
The function \(\mathfrak G_p(R)\) is continuous and positive on
\((0,\infty)\), and it is bounded above by the \(q=3\) term
\((L_3/\ell_3)^{1/3}\). Moreover, \(\mathfrak G_p(R)\to0\) as
\(R\to\infty\), and \(\mathfrak G_p(R)\to0\) as \(R\downarrow0\), because one
of the terms with \(q\ge4\) is present. Hence the supremum
\(\mathfrak G_p=\sup_{R>0}\mathfrak G_p(R)\) is finite, positive, and attained
at some \(R_\star>0\).

For \(r\ge1\), set \(R:=r\epsilon^{1/3}\). Equivalently,
\(r=R\epsilon^{-1/3}\), and the constraint \(r\ge1\) becomes
\(R\ge\epsilon^{1/3}\). With this change of variables, each nonconstant term in
\(\Gamma_p(r)\) has the same factor \(\epsilon^{2/3}\):
\[
\frac{L_1}{\ell_1r^2}
=
\epsilon^{2/3}\frac{L_1}{\ell_1R^2},
\qquad
\left(\frac{L_2\epsilon}{\ell_2r}\right)^{1/2}
=
\epsilon^{2/3}\left(\frac{L_2}{\ell_2R}\right)^{1/2},
\]
and, for \(q=3,\ldots,p\),
\[
\left(
    \frac{L_q\epsilon^{q-1}r^{q-3}}{\ell_q}
\right)^{1/q}
=
\epsilon^{2/3}
\left(
    \frac{L_qR^{q-3}}{\ell_q}
\right)^{1/q}.
\]
Therefore
\[
\Gamma_p
=
\sup_{R\ge\epsilon^{1/3}}
\min\left\{
    L_1,\,
    \epsilon^{2/3}\mathfrak G_p(R)
\right\}
\le
\epsilon^{2/3}\mathfrak G_p .
\]

It remains to prove the matching lower bound. Taking
\(r_\star=R_\star\epsilon^{-1/3}\), we have \(r_\star\ge1\) for sufficiently
small \(\epsilon\). Also, for sufficiently small \(\epsilon\),
\(\epsilon^{2/3}\mathfrak G_p\le L_1\). Therefore
\[
\Gamma_p
\ge
\Gamma_p(r_\star)
=
\min\left\{
    L_1,\,
    \epsilon^{2/3}\mathfrak G_p(R_\star)
\right\}
=
\epsilon^{2/3}\mathfrak G_p
\]
for sufficiently small \(\epsilon\). Together with the upper bound, this proves
\(\Gamma_p=\epsilon^{2/3}\mathfrak G_p\).
\end{proof}

\subsection{Proof of Lemma~\ref{thm:calibrated-zero-respecting}}
\label{subsec:proof-calibrated-zero-respecting}

\begin{lemma}[Path Green-kernel estimate]\label{lem:green-kernel-hyperparameter-choice}
Let \(0<h\le 1\), let \(M\in\mathbb N\) satisfy
\(c_M h^{-1}\le M\le C_M h^{-1}\) for fixed positive absolute constants
\(c_M,C_M\), and let
\[
    A_h:=h^2 I_M+\mathsf P_M,
    \qquad
    H:=L_1 A_h=\gamma I_M+L_1\mathsf P_M,
    \qquad
    h=\sqrt{\gamma/L_1}.
\]
Then there exist positive constants \(c,C\), depending only on \(c_M,C_M\), such that for all
\(i,j\in[M]\),
\[
    \frac{c}{L_1h}\exp(-C h|i-j|)
    \le
    (H^{-1})_{ij}
    \le
    \frac{C}{L_1h}\exp(-c h|i-j|).
\]
Equivalently,
\[
    (H^{-1})_{ij}
    =
    \Theta\!\left(
    \frac{1}{\sqrt{\gamma L_1}}
    \exp(-\Theta(h|i-j|))
    \right).
\]
\end{lemma}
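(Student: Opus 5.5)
The plan is to compute \(H^{-1}=L_1^{-1}A_h^{-1}\) exactly through the standard two-sided (Sturm--Liouville) Green-function formula for the tridiagonal matrix \(A_h\), and then read off the bounds. The key point is that \(M\asymp h^{-1}\), so \(h|i-j|\le hM\le C_M\) is bounded. Hence the exponential factors in the statement are trapped between \(e^{-C\,C_M}\) and \(1\). It therefore suffices to show \((A_h^{-1})_{ij}\asymp h^{-1}\) uniformly in \(i,j\), with constants depending only on \(c_M,C_M\). The case \(M=1\) is trivial: \(A_h^{-1}=h^{-2}\) and \(h\asymp 1\). So assume \(M\ge2\).

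First, I define \(\theta>0\) by \(2\cosh\theta=2+h^2\). Since \(h\le1\), this gives \(\theta\asymp h\); concretely, \(2\sinh(\theta/2)=h\). Next I take the two homogeneous solutions of the interior recurrence \(-x_{k-1}+(2+h^2)x_k-x_{k+1}=0\):
\[
\varphi_k:=\cosh\bigl(\theta(k-\tfrac12)\bigr),\qquad \psi_k:=\cosh\bigl(\theta(M-k+\tfrac12)\bigr).
\]
I will check that \(\varphi\) satisfies the first row \((1+h^2)x_1-x_2=0\). This holds because the ghost value \(\varphi_0=\cosh(-\theta/2)=\varphi_1\) encodes the Neumann condition. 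By symmetry, \(\psi\) satisfies the last row. The Casoratian \(W:=\varphi_{k+1}\psi_k-\varphi_k\psi_{k+1}\) is independent of \(k\). Applying the product-to-sum formula, I expect
\[
W=\tfrac12\bigl[\cosh(\theta(M+1))-\cosh(\theta(M-1))\bigr]=\sinh(\theta M)\sinh\theta.
\]
The standard verification, namely that the column \(j\) of the candidate inverse solves the recurrence away from \(k=j\) and has the correct unit jump at \(k=j\), then gives
\[
(A_h^{-1})_{ij}=\frac{\varphi_{\min(i,j)}\,\psi_{\max(i,j)}}{\sinh(\theta M)\sinh\theta}.
\]

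Finally, I estimate each factor using \(\theta\asymp h\) and \(c_M h^{-1}\le M\le C_M h^{-1}\). These give \(\theta M\in[c',C']\) for constants depending only on \(c_M,C_M\), and hence \(\sinh(\theta M)\asymp1\). Also \(\sinh\theta\asymp h\), since \(\theta\le\operatorname{arccosh}(3/2)\) is bounded. Every argument of \(\cosh\) in \(\varphi,\psi\) lies in \([0,\theta(M+1/2)]\subseteq[0,C'']\), so \(1\le\varphi_k,\psi_k\le\cosh C''\). Combining these yields \((A_h^{-1})_{ij}\asymp h^{-1}\), and therefore \((H^{-1})_{ij}\asymp (L_1h)^{-1}=(\gamma L_1)^{-1/2}\). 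Inserting the trivially bounded factors \(\exp(\mp C h|i-j|)\) gives both displayed inequalities.

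The step I expect to need the most care is the exact Green formula, specifically verifying the boundary rows and the value of the Casoratian. A sign slip in the half-integer shifts would change \(W\). I would confirm the formula on \(M=2\) before relying on it. Everything after that is elementary bounding of hyperbolic functions on a compact range.
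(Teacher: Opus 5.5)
Your proposal is correct and follows essentially the paper's route: both compute the exact inverse of the tridiagonal matrix $A_h$ and then bound each factor using $M\asymp h^{-1}$. Indeed, with $\eta=e^{-\theta}$ your Green formula $\varphi_{\min(i,j)}\psi_{\max(i,j)}/(\sinh(\theta M)\sinh\theta)$ coincides with the paper's closed form $\eta^{|i-j|+1}(1+\eta^{2(i\wedge j)-1})(1+\eta^{2(M-i\vee j)+1})/((1-\eta^2)(1-\eta^{2M}))$; the paper obtains it from the continuant products $\delta_1\cdots\delta_k$ rather than a Casoratian, and your remark that $h|i-j|\le C_M$ makes the exponential factors immaterial is a harmless simplification the paper uses implicitly when bounding the remaining factors.
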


\begin{proof}
The case \(M=1\) is immediate, since then \(A_h=h^2I\) and the assumption
\(M\asymp h^{-1}\) forces \(h\asymp1\). Assume \(M\ge2\), and write
\(A=A_h\). In coordinates,
\[
A=
\begin{pmatrix}
1+h^2 & -1 \\
-1 & 2+h^2 & -1 \\
& \ddots & \ddots & \ddots \\
&& -1 & 2+h^2 & -1 \\
&&& -1 & 1+h^2
\end{pmatrix}.
\]

We use the standard inverse formula for tridiagonal matrices in the following
form. For a tridiagonal matrix with diagonal entries
\(\alpha_1,\ldots,\alpha_n\) and off-diagonal entries
\(-\beta_1,\ldots,-\beta_{n-1}\), define
\(\delta_1=\alpha_1\),
\(\delta_{k+1}=\alpha_{k+1}-\frac{\beta_k^2}{\delta_k}\),
\(d_n=\alpha_n\), and
\(d_{k-1}=\alpha_{k-1}-\frac{\beta_{k-1}^2}{d_k}\).
Then, with
\[
    b_i=\frac{\beta_1\cdots\beta_{i-1}}{d_1\cdots d_i},
    \qquad
    a_i=\frac{\beta_i\cdots\beta_{n-1}}{\delta_i\cdots\delta_n\, b_n},
\]
the inverse entries are \(a_{i\wedge j}b_{i\vee j}\). In our case,
\[
    \alpha_1=\alpha_M=1+h^2,\qquad
    \alpha_k=2+h^2\quad(2\le k\le M-1),
    \qquad
    \beta_k=1.
\]
Thus for \(i\le j\), using the convention that empty products equal \(1\),
\begin{equation*}
    (A^{-1})_{ij}
    =
    \frac{
    (\delta_1\cdots\delta_{i-1})(d_{j+1}\cdots d_M)
    }{
    \delta_1\cdots\delta_M
    }.
    \tag{*}
\end{equation*}

Let \(\eta\in(0,1)\) be the smaller root of
\(s^2-(2+h^2)s+1=0\), namely
\(\eta=\frac{2+h^2-h\sqrt{h^2+4}}{2}\), so that
\(\eta+\eta^{-1}=2+h^2\). Set
\(\Delta_k:=\delta_1\cdots\delta_k\), with \(\Delta_0:=1\). For
\(0\le k\le M-1\),
\[
    \Delta_k
    =
    \eta^{-k}\frac{1+\eta^{2k+1}}{1+\eta},
\]
and the final endpoint gives
\[
    \Delta_M
    =
    \eta^{-M}\frac{(1-\eta)(1-\eta^{2M})}{1+\eta}.
\]
Similarly, by reversing the path,
\[
    d_{j+1}\cdots d_M
    =
    \eta^{-(M-j)}
    \frac{1+\eta^{2(M-j)+1}}{1+\eta}.
\]
Substituting these formulas into \((*)\), and then using symmetry, yields
\[
    (A^{-1})_{ij}
    =
    \frac{
    \eta^{|i-j|+1}
    (1+\eta^{2(i\wedge j)-1})
    (1+\eta^{2(M-i\vee j)+1})
    }{
    (1-\eta^2)(1-\eta^{2M})
    }.
\]
Since \(1-\eta=\frac h2\left(\sqrt{h^2+4}-h\right)\), we have
\(1-\eta\asymp h\), \(1-\eta^2\asymp h\), and
\(-\log\eta\asymp h\) for \(0<h\le1\). Because \(M\asymp h^{-1}\),
\(1-\eta^{2M}\asymp1\). The remaining factors are bounded between positive
absolute constants, and therefore
\[
    c h^{-1}e^{-C h|i-j|}
    \le
    (A^{-1})_{ij}
    \le
    C h^{-1}e^{-c h|i-j|}.
\]
Finally, \(H^{-1}=L_1^{-1}A_h^{-1}\), so
\[
    (H^{-1})_{ij}
    \asymp
    \frac1{L_1h}e^{-\Theta(h|i-j|)}
    =
    \frac1{\sqrt{\gamma L_1}}e^{-\Theta(h|i-j|)}.
\]
\end{proof}

We now prove Lemma~\ref{thm:calibrated-zero-respecting}.

\begin{proof}[Proof of Lemma~\ref{thm:calibrated-zero-respecting}]
Let the parameters be chosen as in Condition~\ref{cond:hyperparameter-choice}
and set \(h:=\sqrt{\gamma/L_1}\),
\(W:=L_1^{1/4}\epsilon\gamma^{-5/4}\),
\(j_\star:=\lfloor M/2\rfloor\), and \(S:=\{j_\star,\ldots,M\}\).
Thus \(a=\mathbf 1_S/W\), \(M\asymp h^{-1}\),
\(\gamma\asymp\Gamma_p\), \(M\asymp L_1^{1/2}\gamma^{-1/2}\), and
\(N\asymp\Delta\gamma/\epsilon^2\).
Let \(e,\rho,\alpha,\beta\) be the quantities in
\eqref{eq:scalar-reduction-geometry} for the present \((M,\gamma,L_1,a)\), and
set \(H:=H_{\gamma,L_1}^{(M)}\).

\paragraph{Calibrated orders and feasibility.}
We first check the quantities that enter Lemmas~\ref{thm:boundedness-smoothness}
and \ref{thm:gradient-lower-bound}. Lemma~\ref{lem:green-kernel-hyperparameter-choice}
gives
\[
    (H^{-1})_{ij}
    \asymp
    (\gamma L_1)^{-1/2}\exp(-\Theta(h|i-j|)).
\]
Taking \(i=j=1\) gives \(\rho\asymp(\gamma L_1)^{-1/2}\). For the suffix
quantities, \(|S|\asymp h^{-1}\) and \(h|j-1|\asymp1\) for \(j\in S\), so
\[
    \|a\|
    =
    \frac{\sqrt{|S|}}{W}
    \asymp
    \frac{\gamma}{\epsilon},
\]
and
\[
    \alpha
    =
    \frac1W\sum_{j\in S}(H^{-1})_{j1}
    \asymp
    \frac1W h^{-1}(\gamma L_1)^{-1/2}
    =
    \frac{\gamma^{1/4}}{L_1^{1/4}\epsilon}.
\]
Also, for each \(i\in S\),
\[
    \sum_{j\in S}(H^{-1})_{ij}
    \asymp
    h^{-1}(\gamma L_1)^{-1/2}
    =
    \gamma^{-1},
\]
and therefore
\[
    \beta
    =
    \frac1{W^2}\sum_{i,j\in S}(H^{-1})_{ij}
    \asymp
    \frac1{W^2}h^{-1}\gamma^{-1}
    =
    \frac{\gamma}{\epsilon^2}.
\]
In particular, \(\tau=\frac1\beta\asymp\frac{\epsilon^2}{\gamma}\) and
\(\frac{\rho}{\alpha^2}\asymp\frac{\epsilon^2}{\gamma}\).
The choice \(r=r_\star\) and \(\gamma=c_{\rm hp}\Gamma_p\), with
\(c_{\rm hp}\) sufficiently small, imply
\(\ell_qr^{3-q}\gamma^q\epsilon^{1-q}\lesssim L_q\), \(q=1,\ldots,p\).
Substituting the calibrated orders into
Lemma~\ref{thm:boundedness-smoothness} gives
\[
\begin{aligned}
    \operatorname{Lip}\!\left(\nabla f_{\Delta,L_{1:p},\epsilon}\right)
    &\lesssim
    L_1
    +
    \frac{\|a\|}{\alpha}
    +
    \left(
        \frac{\rho}{\alpha^2}
        +
        \frac{\ell_1r^2}{\beta}
    \right)\|a\|^2  \\
    &\asymp
    L_1
    +
    L_1^{1/4}\gamma^{3/4}
    +
    \gamma
    +
    \ell_1r^2\gamma
    \lesssim L_1 .
\end{aligned}
\]
For \(q=2,\ldots,p\),
\[
    \operatorname{Lip}\!\left(D^q f_{\Delta,L_{1:p},\epsilon}\right)
    \lesssim
    \frac{\ell_qr^{3-q}\|a\|^{q+1}}{\beta}
    \asymp
    \ell_qr^{3-q}\gamma^q\epsilon^{1-q}
    \lesssim
    L_q .
\]
It remains to ensure that the calibrated initial gap is at most \(\Delta\).
The tolerance assumption supplies the needed lower bound on \(\Gamma_p\). Since
\(\Gamma_p\ge\Gamma_p(1)\), it is enough to lower bound every term in
\(\Gamma_p(1)\) by a constant multiple of \(\epsilon^2/\Delta\). The \(q=1\)
part of the tolerance assumption gives
\(\frac{L_1}{\ell_1}\gtrsim\frac{\epsilon^2}{\Delta}\),
which controls the \(L_1/\ell_1\) term in \(\Gamma_p(1)\). It also controls the
\(L_1\) term, since \(L_1=\ell_1(L_1/\ell_1)\) and \(\ell_1\) is a fixed
numerical constant: \(L_1\gtrsim\frac{\epsilon^2}{\Delta}\).
For \(q=2,\ldots,p\), the same assumption gives
\[
    \frac{L_q}{\ell_q}
    \gtrsim
    \frac{\epsilon^{q+1}}{\Delta^q},
\]
and hence
\[
    \left(\frac{L_q\epsilon^{q-1}}{\ell_q}\right)^{1/q}
    \gtrsim
    \frac{\epsilon^2}{\Delta}.
\]
Thus \(\Gamma_p(1)\gtrsim\epsilon^2/\Delta\). After decreasing
\(c_\epsilon\) by an absolute factor if necessary, this gives
\(\epsilon^2/\gamma\lesssim\Delta\).
The initial gap bound from the same lemma gives
\[
    f_{\Delta,L_{1:p},\epsilon}(0)
    -
    \inf f_{\Delta,L_{1:p},\epsilon}
    \lesssim
    N\tau+\frac{\rho}{\alpha^2}
    \lesssim
    c_N\Delta+\frac{\epsilon^2}{\gamma}
    \le
    \Delta,
\]
after choosing \(c_N\) sufficiently small. Thus
\(f_{\Delta,L_{1:p},\epsilon}\in
\mathcal F_{1:p}(\Delta,L_1,\ldots,L_p)\), after only absolute-constant
adjustments in the calibration.

\paragraph{Zero-respecting revelation and complexity.}
It remains to turn the calibrated gradient certificate into an iteration lower
bound. The denominator in Lemma~\ref{thm:gradient-lower-bound} has scale
\(\sqrt{\beta}\asymp\frac{\sqrt{\gamma}}{\epsilon}\) and
\(\frac{\beta\sqrt{\rho}}{\alpha}\asymp\frac{\sqrt{\gamma}}{\epsilon}\),
so whenever \(a^\top z_N=0\), Lemma~\ref{thm:gradient-lower-bound} gives
\(\left\|\nabla f_{\Delta,L_{1:p},\epsilon}(z)\right\|\gtrsim\epsilon\).
If the hidden absolute constant is smaller than one, we run the same
construction with a fixed larger multiple of the target tolerance; this changes
all displayed rates only by absolute factors. Let \(\mathsf A\) be any
first-order zero-respecting method and let \(z^{(t)}\) be its iterates on
\(f_{\Delta,L_{1:p},\epsilon}\). Since
\(\operatorname{supp}(a)\subseteq\{j_\star,\ldots,M\}\),
Lemma~\ref{thm:first-order-zero-chain} gives the hiding condition
\(a^\top z_N^{(t)}=0\) for every \(t<Nj_\star\).
Consequently no such iterate is \(\epsilon\)-stationary. Since \(j_\star\asymp
M\),
\[
    T_\epsilon\bigl(\mathsf A,f_{\Delta,L_{1:p},\epsilon}\bigr)
    \ge
    Nj_\star
    \asymp
    NM
    \asymp
    \frac{\Delta\gamma}{\epsilon^2}
    \sqrt{\frac{L_1}{\gamma}}
    \asymp
    \Delta L_1^{1/2}\Gamma_p^{1/2}\epsilon^{-2}.
\]
\end{proof}

\bibliographystyle{ims}
\bibliography{reference}
\end{document}